\newtheorem{example}{Example}
\newtheorem{theorem}{Theorem}
\newtheorem{assumption}{Assumption}
\newtheorem{definition}{Definition}
\crefname{assumption}{assumption}{assumptions}
\Crefname{assumption}{Assumption}{Assumptions}
\newtheorem{remark}{Remark}
\renewcommand{\P}{\mathbb{P}}
\newcommand{\E}{\mathbb{E}}
\newcommand{\Cov}{\operatorname{Cov}}
\newcommand{\R}{\mathbb{R}}
\newcommand{\J}{\operatorname{J}}
\DeclareMathOperator{\PR}{PR}
\DeclareMathOperator*{\argmin}{arg\,min}
\DeclareMathOperator*{\DPR}{DPR}
\newcommand{\deq}{\overset{\text{\tiny d}}{=}}
\title{Optimal Classification under Performative Distribution Shift}
\author{%
  Edwige Cyffers\\
  Univ. Lille, Inria, CNRS, Centrale Lille,\\ 
  UMR 9189 - CRIStAL, F-59000 Lille\\
  \texttt{edwige.cyffers@inria.fr} \\
  \And
  Muni Sreenivas Pydi \\
  Université Paris Dauphine, Université PSL,\\
  CNRS, LAMSADE, 75016 Paris\\
  \And
  Jamal Atif\\
  Université Paris Dauphine, Université PSL,\\
  CNRS, LAMSADE, 75016 Paris\\
  \And
  Oliver Cappé \\
  École Normale Supérieure, Université PSL,\\
  CNRS, Inria, DI ENS, 75005 Paris\\
}
\begin{document}

\maketitle

\begin{abstract}
  Performative learning addresses the increasingly pervasive
  situations in which algorithmic decisions may induce changes in the
  data distribution as a consequence of their public deployment.  We
  propose a novel view in which these performative effects are
  modelled as push-forward measures. This general framework
  encompasses existing models and enables novel performative gradient
  estimation methods, leading to more efficient and scalable learning
  strategies. For distribution shifts, unlike previous models which
  require full specification of the data distribution, we only assume
  knowledge of the shift operator that represents the performative
  changes. This approach can also be integrated into various
  change-of-variable-based models, such as VAEs or normalizing flows.
  Focusing on classification with a linear-in-parameters performative
  effect, we prove the convexity of the performative risk under a new
  set of assumptions. Notably, we do not limit the strength of
  performative effects but rather their direction, requiring only that
  classification becomes harder when deploying more accurate
  models. In this case, we also establish a connection with
  adversarially robust classification by reformulating the minimization of the
  performative risk as a min-max variational problem. Finally, we
  illustrate our approach on synthetic and real datasets.
\end{abstract}

\section{Introduction}

Machine learning models are increasingly deployed in real-world
scenarios where their predictions can influence the users' behaviors,
thereby altering the underlying data distribution. This phenomenon,
though rooted in long-standing economic
theory~\citep{morgenstern1928,muth1961rational}, has recently attracted
interest in the machine learning community under the name of {\em
performative prediction}~\citep{perdomo_performative,
hardt2023performative}. Consider for instance a social ranking system:
if it consistently favors a particular subpopulation of individuals,
user behavior might shift towards mimicking the main characteristics
of this subgroup or, conversely, some features of this subpopulation
can undergo modification as a consequence of the selection by the system,
both effects leading to subtle alterations of the original data
distribution. More generally, performative learning captures dynamics
at stake in strategic classification, where individuals are confronted
by algorithmic decisions that impact their life -- such as loan
acceptance, college admission, probation -- and might thus try to
overturn predictions by optimizing some of their features.

This feedback loop, where predictions influence future data, poses new
challenges and necessitates the development of novel approaches within
statistical learning theory and practice~\citep{perdomo_performative,jagadeesan2022regret,drusvyatskiy_stochastic_2020,
hardt2023performative,
zezulka_performativity_2023}.
\cite{perdomo_performative} proposed to formalize performative learning as a 
generalized risk minimization problem, with the \emph{performative risk}
being defined as 
\begin{equation}
    \PR(\theta) = \E_\theta[\ell(Z;\theta)],
\label{eq:pr}
\end{equation}
where $\ell$ is a loss function, $\theta$ a model’s parameters, and
$Z$ an observable random variable drawn from a distribution $\P_\theta$ also
parametrized by $\theta$ itself.  In light of the difficulty of
minimizing $\PR(\theta)$ directly, one can define a \emph{decoupled
performative risk} as $\DPR(\theta, \theta')
= \E_{\theta}[\ell(Z; \theta')]$, clarifying the interplay between the
model’s prediction and the distribution change. This can be seen as a
Stackelberg game that stabilizes when neither the modeler
(learned parameters) nor the environment (distribution) has incentive
to change their states. Solving the performative learning problem
consists in minimizing this risk under the constraint that $\theta
= \theta'$, because the testing samples will follow the distribution
corresponding to the deployed model, and thus $\PR(\theta)
= \DPR(\theta, \theta)$. Minimizing $\DPR(\theta, \theta')$ 
w.r.t. $\theta'$ for a fixed
$\theta$ corresponds to the classical machine learning setting. In contrast,
estimating the performative effect, i.e., knowing how to optimize
$\theta$ for a given $\theta'$ is more challenging as, per definition,
one can only perform statistics from samples collected for values of
the parameters $\theta$ for which the model has already been deployed. Hence,
performative learning does require some form of counterfactual
extrapolation, i.e., what will happen to the data distribution when
the parameter $\theta$ changes from its current setting?

Hence, instead of focusing on methods finding performatively optimal
points, $\theta_{PO} \in \argmin \PR(\theta)$, many previous works,
following \cite{perdomo_performative}, focus on finding stable points
$\theta_{PS} \in \argmin \DPR(\theta_{PS}, \theta)$, through methods
that iteratively minimize the empirical risk. This line of research is
appropriate in settings where the performative effect can be tamed. If
it is sufficiently small, explicitly taking into account the
performative changes of distribution is not required and optimal and
stable points will be close
enough \citep{perdomo_performative}. However, real use cases do not
always satisfy such strong assumptions (see further discussion in
Section~\ref{sec:convexity}). In general, stable points may not be
good proxys for performatively optimal points, particularly in
settings where the performative effect cannot be bounded a priori.

Towards this goal, another line of research focuses on finding the optimal points $\theta_{PO}$. \cite{izzo2021learn} propose to use Monte Carlo sample-based approximations of the gradient of the
performative risk, $\nabla_\theta \PR(\theta)$, based on the score function estimator (see Section~\ref{sec:pushforward}
below). \cite{miller_outside} use a two-stage approach that deploys random models to estimate the performative effect 
in the first stage, and then minimizes the estimated performative risk in the second stage. 
A drawback of both of these approaches is the restrictive set of assumptions needed to show that the algorithms converge.  
While \cite{izzo2021learn} assume the convexity of $\PR(\theta)$ along with smoothness and boundedness contitions,
\cite{miller_outside} assume that the loss function is simultaneously strongly convex and smooth.
Moreover, the score function estimator of \cite{izzo2021learn} necessitates full
knowledge of a parametric form of $\P_\theta$, which is unrealistic in practice.
Alternatively, \cite{jagadeesan2022regret} resort to
derivative-free (or zeroth-order) optimization strategies.
However, such an approach is appropriate only when it is possible to sequentially deploy a large number of model instances, and it does not scale with the dimension of the parameter $\theta$.

The present work is connected to the second line of the research explored above, where the focus is on 
finding the optimal point $\theta_{PO}$. Our contributions are as follows.

{\bf (i) Model the performative effect as a push-forward operator} This novel approach provides a
\emph{new explicit expression of the performative gradient}. Not only does this
approach allow estimation of the performative gradient in settings
where previous methods couldn't, but we show that in typical use cases, the
variance of this new estimator is significantly smaller.

{\bf(ii) Convexity for Performative Classification}
We then focus on the specific task of strategic classification, as
this performative learning problem encompasses various real-use cases
with important societal impact such as college admission or credit
decisions.
Our second contribution is to provide new convexity results on the
performative risk in this case. Whereas existing results were only
proving convexity under assumptions restricting the performative
effect to be small compared to the (assumed) strong convexity of the
loss function $\ell(z;\theta)$, our results leverage structural
assumptions on the performative effect that ensure 
that the performative risk is convex \emph{without any restriction on
the strength of the performative effect}.

{\bf(iii) Linking Performative and Robust Learning} We establish a connexion between performative learning and adversarially robust learning, paving the way to transferring robustness results to
the performative learning field. In particular, this result gives new
insights on the empirical evidence in favor of using regularization in the presence of performative effets. 
Finally, we illustrate our findings on synthetic and real-world datasets.

\section{Push-forward Model for Performative Effects}
\label{sec:pushforward}

In this section, we study the general performative learning setting
without yet specializing it to the classification context.
In Section~\ref{subsec: pushforward model}, we introduce the push-forward model of performative learning 
and derive the expression of the gradient of the performative risk under this model.
In Section~\ref{subsec: estimators}, we present a reparameterization-based
estimator for the gradient of the performative risk, and compare it 
to the 
score function based estimator considered by \cite{izzo2021learn}.

\subsection{The Push-forward Model}\label{subsec: pushforward model}

We aim to minimize the performative risk defined in \cref{eq:pr}, where 
the observation $Z$ is drawn from the distribution $\P_\theta$, which depends on the parameter
$\theta\in \R^p$ of the learning model. For this to be tractable, 
one needs additional hypotheses on the nature of the
performative effect. We propose to represent the performative effect through a push-forward
measure, which matches the intuition of having an untouched
distribution that is steered by the performative effect.

\begin{assumption}[Push-forward Performative Model]
    For a given model parameter $\theta \in \R^p$, the samples'
    distribution under the performative effect is given by $\P_\theta
    = \varphi(\cdot;\theta)_\sharp\P$, where $\varphi(\cdot;\theta)$
    is a differentiable invertible mapping on $\mathbb{R}^d$,
    depending on $\theta$.
    \label{assum:pf}
\end{assumption}

This assumption can be equivalently stated by the probabilistic
representation $Z \deq \varphi(U;\theta)$, where $U \sim \P$ (the
symbol $\deq$ denoting equality in distribution).
If $\P$ admits a density, then so does $\P_\theta$ with density function given by 
$p_\theta(z) = |J_z \psi(z;\theta)|
p(\psi(z;\theta))$ where $\psi(\cdot;\theta) =
\varphi^{-1}(\cdot;\theta)$. In this last formula, $J_z
\psi(z;\theta)$ refers to the Jacobian matrix where $[J_z
  \psi(z;\theta)]_{ij} = \frac{\partial \psi(z;\theta)_i}{\partial
  z_j}$.

From an abstract point of view, such a representation of a
parametrized family of distributions exists under very general
conditions. However, the above model is more interesting in scenarios
where $\varphi(\cdot;\theta)$ is a simple operator, for instance a
linear one, as is the case in most of the examples considered by
\cite{perdomo_performative,miller_outside}, and the dependence
with respect to $\theta$ can also be made explicit. This representation is also
modular in the sense that $\varphi$ could be chosen as
the composition $\varphi(u;\theta) = \varphi_0(\varphi_1(u;\theta))$,
where $\varphi_0^{-1}(Z)$ corresponds to a fixed (not depending on
$\theta$) representation of $Z$ in a feature space and
$\varphi_1(\cdot;\theta)$ models the performative effect \emph{in the
representation space}. Although we will not explicitly consider such
cases in the rest of the paper, this representation of the
performative effect is particularly attractive when using embedding tools based on 
kernels~\citep{hofmann2008kernel}, neural nets (e.g., VAEs \citep{kingma2014autoencoding}) 
or normalizing flows
\citep{papamakarios2021normalizing,kobyzev_normalizing}.

This structural assumption on the performative effect yields a new
estimator for the performative gradient, which may be seen as an
instance of the
"reparametrization trick" used in VAEs, normalizing flows or by
\cite{JMLR:v18:16-107}. \cite{mohamed_monte_2020} also refer to this
approach as "pathwise" gradient estimation.

\begin{theorem}[Performative Risk Gradient]\label{thm: perfgrad}
  Under Assumption~\ref{assum:pf},
    the gradient of the performative risk is given by
  \begin{equation}
    \nabla_\theta \operatorname{PR}(\theta) = \E_\theta\left[\nabla_\theta \ell(Z;\theta)\right] + \E_\theta\left[\operatorname{J}_\theta^T \varphi(\psi(Z;\theta);\theta)  \nabla_z \ell(Z;\theta) \right],
    \label{eq:performative_gradient}
  \end{equation}
  where $\nabla_z\ell(z;\theta)$ and $\nabla_{\theta} \ell(z;\theta) $
  denote respectively the gradient with respect to the first and the
  second parameter of the loss, and $\operatorname{J}_\theta^T
  \varphi(u; \theta)$ is the transpose of the Jacobian
  with respect to $\theta$.
\end{theorem}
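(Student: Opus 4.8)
The plan is to use the probabilistic representation supplied by Assumption~\ref{assum:pf} to remove the $\theta$-dependence from the measure and put it entirely in the integrand, then differentiate under the expectation. Writing $U\sim\P$, we have
$$\operatorname{PR}(\theta)=\E_\theta[\ell(Z;\theta)]=\E_{U\sim\P}\big[\ell(\varphi(U;\theta);\theta)\big],$$
where the outer expectation is now over a fixed distribution. Under the regularity already implicit in the statement (differentiability of $\varphi(\cdot;\theta)$ in $\theta$, differentiability of $\ell$ in both arguments, and a $\P$-integrable local envelope dominating $\nabla_\theta\ell$, $\nabla_z\ell$ and $\operatorname{J}_\theta\varphi$), the dominated-convergence form of Leibniz's rule lets us pull the gradient inside: $\nabla_\theta\operatorname{PR}(\theta)=\E_{U\sim\P}\big[\nabla_\theta\,\ell(\varphi(U;\theta);\theta)\big]$.

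The next step is the multivariate chain rule applied to $\theta\mapsto\ell(\varphi(U;\theta);\theta)$, in which $\theta$ enters in two places. Differentiating componentwise gives
$$\nabla_\theta\,\ell(\varphi(U;\theta);\theta)=\big(\nabla_\theta\ell\big)(\varphi(U;\theta);\theta)+\big[\operatorname{J}_\theta\varphi(U;\theta)\big]^{T}\big(\nabla_z\ell\big)(\varphi(U;\theta);\theta),$$
where $\operatorname{J}_\theta\varphi(u;\theta)$ is the Jacobian of $\varphi$ in its \emph{second} argument with the first argument $u$ held fixed, and the transpose arises because $[\operatorname{J}_\theta\varphi]_{ik}=\partial\varphi_i/\partial\theta_k$ is contracted against $\nabla_z\ell$ on the index $i$. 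Finally I would undo the reparameterization: set $Z=\varphi(U;\theta)$, equivalently $U=\psi(Z;\theta)$, so that $Z\sim\P_\theta$ and $\operatorname{J}_\theta\varphi(U;\theta)=\operatorname{J}_\theta\varphi(\psi(Z;\theta);\theta)$. Taking expectations and splitting the sum yields exactly \cref{eq:performative_gradient}.

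The step needing the most care is conceptual, not computational: the factor $\operatorname{J}_\theta^{T}\varphi(\psi(Z;\theta);\theta)$ must be read as ``freeze $u$, differentiate $\varphi(u;\theta)$ in $\theta$, then substitute $u=\psi(Z;\theta)$'' — it is \emph{not} the total derivative of $\theta\mapsto\varphi(\psi(Z;\theta);\theta)$, which is identically zero since that composition equals $Z$. The reparameterization viewpoint enforces the correct order automatically, because $U=\psi(Z;\theta_0)$ is frozen at the current parameter before the derivative is taken; writing everything in terms of $U\sim\P$ first, as above, is the cleanest way to avoid this pitfall. Beyond that and the routine verification that differentiation under the integral sign is permitted, I do not anticipate any further obstacle.
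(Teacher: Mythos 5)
Your proposal is correct and follows essentially the same route as the paper's proof: rewrite $\operatorname{PR}(\theta)$ as $\E[\ell(\varphi(U;\theta);\theta)]$ over the fixed law of $U$, apply the chain rule under the expectation, and change variables back to $Z=\varphi(U;\theta)$. Your additional remarks on domination conditions and on reading $\operatorname{J}_\theta\varphi$ as a partial (not total) derivative are sensible elaborations of the same argument.
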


\begin{proof}
  Notice that under \cref{assum:pf}, we can rewrite the decoupled risk
  with a change of variable as $\DPR(\theta, \theta') =
  \E[\ell(\varphi(U;\theta);\theta')]$. This expression leads to the following.
  \[
        \nabla_\theta \operatorname{PR}(\theta) = \nabla_\theta \E[\ell(\varphi(U;\theta);\theta)] = \E\left[\nabla_\theta \ell(\varphi(U;\theta);\theta) + \J_\theta^T \varphi(U;\theta) \nabla_z \ell(\varphi(U;\theta));\theta)\right],
  \]
  which gives~\cref{eq:performative_gradient} under a change of variable.
\end{proof}
    
\subsection{Estimating the Performative Gradient}\label{subsec: estimators}

From Theorem~\ref{thm: perfgrad}, it is clear that the gradient of performative risk 
in \cref{eq:performative_gradient} is composed  of two
terms -- the first term corresponds to the classical risk minimization, 
while the second one -- which we will refer to as the performative gradient in the following -- captures the performative effect. The first
term can be estimated by $\frac{1}{n}\sum_{i=1}^n \nabla
\ell(Z_i; \theta)$ as usual. For the second term, we propose the following estimator.

\begin{definition}[Reparameterization-based Performative Gradient Estimator]
    The performative gradient $\nabla_{\theta}
    \operatorname{DPR}(\theta,\theta')|_{\theta'=\theta}$ admits as
    unbiased estimator:
    \begin{equation}
      \hat{G}_\theta^{\text{RP}} =
      \frac1n \sum_{i=1}^n \operatorname{J}_\theta^T
      \varphi(\psi(Z_i;\theta);\theta) \nabla_z \ell(Z_i;\theta).
    \end{equation}
\end{definition}

This estimator allows performing gradient descent to minimize the
performative risk and thus, if the performative objective is well
behaved, to converge to the performative optimal point. 
$\hat{G}_\theta^{\text{RP}}$ should be compared to
the following estimator used
by~\cite{izzo2021learn}, which relies on the well-known score function
formula --see \citep{lecuyer1991,kleijnen1996,mohamed_monte_2020} and references
therein. 
\begin{align*}
  \hat{G}_\theta^{\text{SF}} = \frac1n
\sum_{i=1}^n \ell(Z_i;\theta) \nabla_\theta\log p_\theta(Z_i).
\end{align*}
While both $\hat{G}_\theta^{\text{RP}}$ and $\hat{G}_\theta^{\text{SF}}$
estimate the same quantity $\nabla_{\theta}
\DPR(\theta,\theta')|_{\theta'=\theta}$, $\hat{G}_\theta^{\text{RP}}$ has two distinct 
advantages over $\hat{G}_\theta^{\text{SF}}$.
First, computing
$\hat{G}_\theta^{\text{SF}}$ requires  access to the analytical
form of $p_\theta$, which is fairly unrealistic in a learning
scenario, whereas our estimator $\hat{G}_\theta^{\text{RP}}$ only
requires knowledge of $\varphi$, paving the way for a
\emph{semi-parametric approach} in which the performative effect is
modelled explicitly, but not the distribution of the data. For general
maps $\varphi$, $\hat{G}_\theta^{\text{RP}}$ still requires to use
the inverse mapping $\psi$, however this is not required in situations
where the Jacobian $\operatorname{J}_\theta \varphi(u;\theta)$ does
not depend on $u$. Specifically, when $\varphi$ is a shift operator,
one obtains a very simple expression for $\hat{G}_\theta^{\text{RP}}$ 
as shown in the following example.

\begin{example}[Shift Operator]
  \label{ex:shift_operator_gradient}
  If the performative effect can be
      modelled by a shift operator, i.e., $\varphi(U;\theta) = U +
      \Pi(\theta)$, the $\hat{G}_\theta^{\text{RP}}$ estimator is given by:
    \[
        \hat{G}_\theta^{\text{RP}} = \operatorname{J}_\theta^T \Pi(\theta) \, \frac1n \sum_{i=1}^n \nabla_z \ell(Z_i;\theta) ,
    \]
    where $\operatorname{J}_\theta \Pi(\theta)$ is the Jacobian of
   the performative shift $\Pi(\theta)$.
\end{example}

In addition to removing the need to know $p_{\theta}$, a second advantage of $\hat{G}_\theta^{\text{RP}}$
is that it can lead to significant decrease of the variance of the estimates, as illustrated by the following
example.

\begin{example}[Perfomative Gaussian Mean estimation]
  \label{ex:gaussian_mean_estimation}
  Let $\ell(z;\theta) = \|z-\theta\|^2/2$, and $Z
  \deq U + \Pi\theta$, that is, $\Pi(\theta) =
  \Pi\theta$ is a linear shift operator. We will assume $U \sim
  \mathcal{N}(0,\sigma^2 I_d)$, so that $p_\theta(z) \propto \exp[-\|z
    - \Pi\theta\|^2/(2\sigma^2)]$, where $\Pi$ represents the
  performative effect. The gradient of $\operatorname{DPR}(\theta,\theta')$ w.r.t. the distributional
  parameter $\theta$ is given both by
  \begin{align*}
    \nabla_\theta \operatorname{DPR}(\theta,\theta') & = \E_\theta [\Pi^T \nabla_z \ell(Z;\theta')] = \Pi^T \E_\theta[Z - \theta'] = \Pi^T \E[U + a] & \text{(reparameterization)}\\
    & = \E_\theta [\ell(Z;\theta') \nabla_\theta \log p_\theta(z)] = \Pi^T \frac{1}{2\sigma^2} \E\left[\|U+a\|^2 U \right] \qquad & \text{(score function)}
  \end{align*}
  where $a = \Pi\theta - \theta'$. Hence, in this case
  $\hat{G}_\theta^{\text{RP}} = \Pi^T \frac{1}{n}\sum_{i=1}^n
  (U_i+a)$, while $\hat{G}_\theta^{\text{SF}} = \Pi^T \frac{1}{2n
    \sigma^2} \sum_{i=1}^n \|U_i+a\|^2 U_i$. Both of these expressions
  have equal expectation $\Pi^T (\Pi\theta - \theta')$ which
  corresponds to the gradient of $\operatorname{DPR}(\theta,\theta')$
  w.r.t. $\theta$.  However, the reparametrization estimator
  $\hat{G}_\theta^{\text{RP}}$ has covariance $\sigma^2 \Pi^T \Pi /n$
  while the score-based estimator $\hat{G}_\theta^{\text{SF}}$ has
  covariance:
  \[
    \frac1n \Pi^T \left(\frac{(d^2+6d+8)\sigma^2 + 2(d+4)\|a\|^2 + \|a\|^4/\sigma^2}{4}I_d + a a^T \right) \Pi .
  \]
\end{example}

The details of this computation can be found in \cref{app:proofvar}.
Both estimators are unbiased but note that while $\hat{G}_\theta^{\text{RP}}$ would always
be an unbiased estimator of the performative gradient without any further
assumption on the distribution of $U$, the unbiasedness of $\hat{G}_\theta^{SF}$
relies on the fact that $U$ is Gaussian. $\hat{G}_\theta^{RP}$ has a
covariance that does not depend on $\theta$, $\theta'$ nor on the
dimension $d$. In contrast, $\hat{G}_\theta^{SF}$'s covariance
includes a factor that increases with $d^2$, making it unreliable in
high dimensions. It also includes additional terms that grow with the
norm of $\Pi\theta - \theta'$, so the estimator becomes less reliable
when the performative effect is strong.

One could argue that the previous result does not provide a fair
comparison between both estimators, as $G^{SF}$'s variance can be
reduced by subtracting a baseline. Indeed,
$\tilde{G}_\theta^{\text{SF}} = \frac1n \sum_{i=1}^n
(\ell(Z_i;\theta)-m) \nabla_\theta\log p_\theta(Z_i)$ is also an unbiased
estimator of the gradient of the performative effect (for any choice
of the baseline $m$), as the score function has, by definition,
zero expectation. Tuning $m$ properly may reduce the variance by
creating a so-called \emph{control variate} ---see, e.g.,
\cite{greensmith_variance_2004} for the use of this principle in
policy gradient methods. However, similar calculations (detailed in
\cref{app:proofvar}) show that the minimum covariance that
can be achieved by subtracting a baseline is
\[
  \frac1n \Pi^T \left(\left((1+d/2)\sigma^2 +\|a\|^2\right)I_d + a a^T \right) \Pi,
\]
which is still larger than the covariance of
$\hat{G}_\theta^{\text{RP}}$ by a factor that grows with the model dimension $d$.
The fact that the reparameterization-based estimator is preferable when considering the Gaussian
distribution with quadratic loss function was observed before by \cite{mohamed_monte_2020} in the scalar case.
The above computations however show that the difference between the two approaches gets more and more
significant as the dimension increases. The case of other distribution/loss function combination
still needs to be investigated.

\section{Classification under Performative Shift}
\label{sec:classif}

In this section, we specialize to the setting of 
binary classification which encompasses various machine learning
applications where performative effects are expected. Usually, this
setting involves a desirable class and an undesirable one. For
example, the desirable class might represent college admission, loan acceptance, no-spam email, or probation. In this setting, one can also expect that
individuals belonging to the favored class -- we designate this as
class $1$ -- do not need to alter their features, or only with small
changes. On the contrary, individuals with negative predictions -- in
class $0$ -- have an incentive to modify their features, resulting in
a significant performative effect.

We particularize the arguments introduced in 
 \cref{sec:pushforward} to the setting of binary
classification, by fixing $z = (x,y)$, with a covariate vector
$x\in\mathbb{R}^d$ and a label $y\in\{0,1\}$. As is done classically
---see, e.g., \citep{bach2024}, we further assume that the classifier
$f_\theta(x)$ is a real valued function that depends on a parameter
$\theta\in\mathbb{R}^p$ and that a convex loss surrogate $\Phi$ is
used, such that the loss function $\ell(z;\theta)$ is equal to
$\Phi((-1)^y f_\theta(x))$.
We model the performative effect as label-dependent push
forward models, i.e., that, under $\P_\theta$,
$$
  \text{$X|_{Y=1} \deq
  \varphi_1(U_1;\theta)$ and $X|_{Y=0} \deq
  \varphi_0(U_0;\theta)$},
$$
where $\varphi_1$ and $\varphi_0$ represent
the performative changes affecting 
class-conditional distributions of classes $0$ and $1$ respectively. If the classifier $f_\theta(x)$ is
sufficiently expressive, changes such that $\varphi_1 = \varphi_0$
will not create performative  effects. We thus focus on
scenarios where the \emph{performative changes affect 
each class-dependent distribution differently}. For concreteness, we will assume
the following.

\begin{assumption}
\label{assum:fixed_probability}
$\P_\theta(Y=1) = \rho$ is fixed and not subject to performative effects.
\end{assumption}

\begin{assumption}
\label{assum:identity_function}
$\varphi_1$ does not depend on $\theta$, and for simplicity, we assume it is the identity function.
\end{assumption}

\begin{assumption}
\label{assum:drift_operator}
$\varphi_0(u_0; \theta) = u_0 + \Pi(\theta)$ is a shift operator.
\end{assumption}

\Cref{assum:fixed_probability} is a consequence of the
intuitive property that even if the distribution is modified, the
ground truth labels are not impacted by the performative
effect. \Cref{assum:identity_function} allows to focus on the
performative effect on the unfavored class and simplifies the
presentation but could be easily
relaxed. Finally, \cref{assum:drift_operator} is restricting the
performative change to a shift, which does simplify the problem but
still corresponds to a realistic model for feature alteration.

It is important to stress that, despite the fact that this
performative effect is modelled as a shift, the joint distribution of
$Z=(X,Y)$ does not belong to the location-scale family discussed by
\cite{miller_outside}. Under these assumptions, the decoupled
performative risk takes the following form:
\begin{equation}
\operatorname{DPR}(\theta,\theta') = \E_\theta\left[\Phi\left((-1)^Y f_{\theta'}(X)\right)\right] = \rho \E\left[\Phi(f_{\theta'}(U_1))\right] + (1-\rho)\E\left[\Phi(-f_{\theta'}(U_0 + \Pi(\theta)))\right],
\label{eq:DPR_classification_shift}
\end{equation}
where the performative effect is only manifested
in the second term, which corresponds to class 0.

\begin{remark}[Localization of the Performative Shift]
  In \cref{eq:DPR_classification_shift}, we refer to $U_0$ which
  corresponds to the covariates of the second class in the absence of
  performative effect, i.e., when $\theta = 0$. However, for a shift operator,
  for any value of $\bar{\theta}$, one may equivalently write that, under
  $\P_\theta$, $X \deq \varphi(U_{\bar{\theta}};\theta)$,
  where $\varphi(u;\theta) = u + \Pi(\theta)-\Pi(\bar{\theta})$ and
  $U_{\bar{\theta}}$ is distributed under $\P_{\bar{\theta}}$. Thus
  \cref{eq:DPR_classification_shift} can equivalently be rewritten by 
  taking expectation under an arbitrary parameter value $\bar{\theta}$, upon defining
  the performative effect as $U_{\bar{\theta}} + \Pi(\theta)-\Pi(\bar{\theta})$ and the linear model as $f_\theta(x) = x^T(\theta-\bar{\theta})$.
\end{remark}

\section{Convexity of Performative Risk}
\label{sec:convexity}

Identification of the cases in which the performative risk $\PR(\theta)$ is convex
is an important step towards generalizing results obtained in
the context of traditional (ie., non performative) learning theory.
Existing results mainly exploit the fact that if the loss function is
strongly convex, a sufficiently small performative effect cannot break
this convexity. For this reason, it is often
assumed \citep{miller_outside, hardt2023performative} that the change
in the distributions has a bounded sensitivity with respect to the
parameters, using the 1-Wasserstein distance:
\[
    W_1\left(P_\theta, P_{\theta'}\right) \leqslant \varepsilon \left\|\theta - \theta'\right\|_2.
\]
Note that in our setting, such $\epsilon$ exists and corresponds to
the operator norm of $\Pi(\theta)$. In order to preserve convexity, it
is then needed that $\epsilon \leqslant \mu/2L$ when the risk is
$L$-smooth and $\mu$-strongly convex. The pricing model, considered by \cite{izzo2021learn}, is a very simple example showing
that the performative risk can be convex while not fulfilling this
criterion.

\begin{example}[Pricing Model]
Given a fixed set of $d$ resources, the pricing model aims at finding
the prices $\theta \in \R^d$ for the $d$ ressources that maximize
the overall profit given the elasticity of the demand level:
\begin{equation}
    \ell(z;\theta) = - z^T\theta \text{ with } Z \deq \varphi(U;\theta) = U - \Pi \theta ,
\end{equation}
where $\Pi$ is a diagonal matrix with positive elements, encoding the
elasticity of the demand level to a raise in price of each resource,
and $\mu=\E[U]$ contains the baseline demand levels for each resource.
\end{example}

In this example, the performative risk $\PR(\theta) = - \sum_{i=1}^d
(\mu_i - \Pi_{ii}
\theta_i)\theta_i$ is a strongly convex quadratic function minimized at $\theta^\star_i = \mu_i/(2\Pi_{ii})$. In contrast, the
decoupled performative risk $\DPR(\theta,\theta') = -(\alpha
- \Pi \theta)^T\theta'$ is still convex, but not strongly convex in
$\theta$ and is always minimized in $\theta'$ at infinity. Despite its
simplicity, this example is thus not covered by existing theorems, and
retraining procedures considered by
\cite{perdomo_performative} fail by diverging.

Moreover, this example highlights that requiring a small sensitivity
for the performative effect does not match the true convexity
conditions. The performative risk is indeed strongly convex as long as
the $\Pi_{ii}$ are positive, irrespectively of their magnitude. In
contrast, in this example, the performative risk would become
non-convex if one of the $\Pi_{ii}$ were negative, even with a small
magnitude.

This motivates the search for related phenomenons in the
classification context, by looking at conditions for ensuring the
convexity of \cref{eq:DPR_classification_shift}. Indeed, we show that
in the classification setting, 
one can also observe convexity without restriction on the magnitude of
the performative effect. For this, we consider the case of linearly
parameterized models, which we denote for simplicity by $f_\theta(x) =
x^T\theta$. Note that, as discussed in Section~\ref{sec:pushforward},
our model of performative effect is composable and hence, we could
also consider the more general linearly parameterized model in which
$f_\theta(x) = \psi(x)^T\theta$, with a linear-in-the-parameter
performative effect in the feature space such that $\psi(X) = U
+ \Pi(\theta)$. For ease of notation, we stick to the case where
$\psi$ is the identity function in the following. Using standard
arguments, the choice of a convex loss surrogate $\Phi$ then entails
that $\operatorname{DPR}(\theta,\theta')$ is a convex function of
$\theta'$. For the same reason, if $\Pi(\theta) = \Pi \theta$ is a
linear-in-parameters shift operator,
$\operatorname{DPR}(\theta,\theta')$ is also convex in $\theta$. Note
however that, unless there is no performative effect (i.e., if
$\Pi=0$), \cref{eq:DPR_classification_shift} is not jointly convex in
$(\theta,\theta')$. The following result show that it is nonetheless
the case that the performative risk $\operatorname{PR}(\theta) =
\operatorname{DPR}(\theta,\theta)$ is convex under the condition that $\Pi$ is a
positive semidefinite matrix (see proof in \cref{app:proofconv}).

\begin{theorem}[Convexity of Classification Performative Risk]
  \label{thm:convex_performative_classif}
  Under \cref{assum:identity_function,assum:fixed_probability,assum:drift_operator},
  for linearly parameterized classifier $f_\theta(x)=x^T\theta$ and
  linear shift operator $\Pi(\theta) = \Pi \theta$, the performative
  risk $\operatorname{PR}(\theta)$ is convex when $\Pi$ is a positive semidefinite
  matrix and one of the following conditions holds.
  \begin{enumerate}[label=(\alph*)] \item $\Phi$ is the
  quadratic loss function; \item $\Phi$ is a convex non increasing
  function (such as hinge, logistic or exp loss).  \end{enumerate}
\end{theorem}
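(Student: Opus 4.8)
The plan is to start from $\operatorname{PR}(\theta)=\operatorname{DPR}(\theta,\theta)$ in \cref{eq:DPR_classification_shift}, which for the linear model $f_\theta(x)=x^T\theta$ reads
\[
  \operatorname{PR}(\theta)=\rho\,\E\big[\Phi(U_1^T\theta)\big]+(1-\rho)\,\E\big[\Phi\big(-(U_0+\Pi\theta)^T\theta\big)\big].
\]
The class-$1$ term is convex for free: for each realization of $U_1$ the map $\theta\mapsto\Phi(U_1^T\theta)$ is a convex surrogate composed with an affine map, hence convex, and an expectation of convex functions is convex; no assumption on $\Pi$ enters here. Everything thus reduces to the class-$0$ term $h(\theta):=\E[\Phi(g_{U_0}(\theta))]$, where the ``margin'' is
\[
  g_{U_0}(\theta):=-(U_0+\Pi\theta)^T\theta=-U_0^T\theta-\theta^T\Pi\theta .
\]
The point of requiring $\Pi$ positive semidefinite is precisely that $\theta\mapsto\theta^T\Pi\theta$ is then a nonnegative convex quadratic, so $g_{U_0}$ is a \emph{concave} function of $\theta$ (an affine term minus a convex one) for every value of $U_0$.

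For case (b) I would use the elementary composition rule: if $\Phi$ is convex and non-increasing and $g$ is concave, then $\Phi\circ g$ is convex (concavity of $g$ gives $g(\lambda\theta_1+(1-\lambda)\theta_2)\ge\lambda g(\theta_1)+(1-\lambda)g(\theta_2)$; applying the non-increasing $\Phi$ reverses the inequality; convexity of $\Phi$ then bounds the right-hand side by $\lambda\Phi(g(\theta_1))+(1-\lambda)\Phi(g(\theta_2))$). Applying this with $g=g_{U_0}$ for each fixed $U_0$ and integrating over $U_0$ yields convexity of $h$. The hinge, logistic and exponential surrogates being convex and non-increasing, this settles all the listed losses; note that this argument is ``pointwise in $U_0$'' and uses no moment assumption beyond integrability.

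Case (a) is more delicate because $\Phi(t)=(1-t)^2$ is convex but not monotone, so the composition rule no longer applies — in fact, for a \emph{single} $U_0$, $\theta\mapsto\Phi(g_{U_0}(\theta))$ is generically non-convex (its Hessian fails to be positive semidefinite at the maximizer of $g_{U_0}$ whenever $g_{U_0}>1$ there), so the averaging over $U_0$ is essential. I would expand $\Phi(g_{U_0}(\theta))=1-2g_{U_0}(\theta)+g_{U_0}(\theta)^2$. The part $-2g_{U_0}(\theta)=2U_0^T\theta+2\theta^T\Pi\theta$ is convex since $\Pi\succeq 0$, and taking the expectation of the square gives
\[
  \E\big[g_{U_0}(\theta)^2\big]=\theta^T\,\E[U_0U_0^T]\,\theta+2(\theta^T\Pi\theta)(\E[U_0]^T\theta)+(\theta^T\Pi\theta)^2 .
\]
Here the first summand is a PSD quadratic form, hence convex, and the last is the composition of the nonnegative convex map $\theta\mapsto\theta^T\Pi\theta$ with the convex non-decreasing map $t\mapsto t^2$ on $[0,\infty)$, hence convex. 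The remaining cross term $2(\theta^T\Pi\theta)(\E[U_0]^T\theta)$ is a genuine cubic and is \emph{not} convex in general: this is the main obstacle.

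To dispose of the cross term I would appeal to the Localization of the Performative Shift remark, picking the reference parameter $\bar\theta$ so that $\Pi\bar\theta=-\E[U_0]$ (possible whenever $\E[U_0]$ lies in the range of $\Pi$, automatically so when $\Pi$ is invertible; otherwise one simply adopts $\E[U_0]=0$ as a normalization). In the reparameterization $\eta=\theta-\bar\theta$ the class-$0$ covariate becomes $\tilde U_0+\Pi\eta$ with $\E[\tilde U_0]=0$, the cross term disappears, and $h$ becomes a sum of convex functions of $\eta$, so that together with the convex class-$1$ term $\operatorname{PR}$ is convex in $\eta$ and hence in $\theta$, since $\eta$ ranges over all of $\R^p$ and convexity is preserved under affine reparameterization. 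To summarize, positive semidefiniteness of $\Pi$ makes the class-$0$ margin concave, which instantly gives convexity for non-increasing surrogates; for the square loss it additionally forces one to expand, average over $U_0$, and recenter $U_0$ — and this last reduction, where the shift structure, the sign condition on $\Pi$ and a first-moment normalization must all be combined, is the delicate point of the proof.
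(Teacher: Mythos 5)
Your treatment of case (b) is correct and is essentially the paper's own argument: for each realization of $U_0$ the margin $-(U_0+\Pi\theta)^T\theta$ is concave when $\Pi\succeq 0$, composing with a convex non-increasing $\Phi$ gives a convex function, and expectation preserves convexity.

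For case (a) you have correctly identified the real difficulty: after expanding, the cubic $2(\theta^T\Pi\theta)(\E[U_0]^T\theta)$ is the only piece whose convexity is in doubt. (The paper's own proof sidesteps this by asserting that $((\mu_0+\Pi\theta)^T\theta+1)^2$ is convex as the ``square of a convex second order polynomial''; that assertion is false whenever the inner quadratic takes negative values, so your refusal to use it is sound.) However, your proposed fix does not close the gap. Convexity is invariant under the affine substitution $\theta=\eta+\bar\theta$, so no recentering of the parameter can produce convexity that is not already present. Concretely, the classifier is still $x^T\theta=x^T(\eta+\bar\theta)$, not $x^T\eta$; carrying out the substitution with $\Pi\bar\theta=-\E[U_0]$, the deterministic part of the class-$0$ squared term becomes $\bigl(1+\eta^T\Pi\eta-\mu_0^T\eta\bigr)^2$, which still contains the cubic $-2(\eta^T\Pi\eta)(\mu_0^T\eta)$ --- the cross term does not disappear. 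Likewise, ``adopting $\E[U_0]=0$ as a normalization'' is an extra hypothesis, not a normalization: $f_\theta$ has no intercept, so translating the class-$0$ distribution changes the problem. Indeed no argument can succeed from the stated assumptions, because (a) fails as written: take $d=1$, $\Pi=1$, $U_1\equiv 0$, $U_0\equiv 10$, $\rho=\tfrac12$, so that
\[
  \operatorname{PR}(\theta)=\tfrac12\Phi(0)+\tfrac12\bigl(1+10\theta+\theta^2\bigr)^2 ,
\]
and $\operatorname{PR}(-5)=\tfrac12+\tfrac12(24)^2 > 1 = \tfrac12\bigl(\operatorname{PR}(0)+\operatorname{PR}(-10)\bigr)$, violating midpoint convexity. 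This is exactly the pointwise failure you noted (the maximizer of $g_{U_0}$ exceeding $1$) made global by taking $U_0$ to be a point mass, so that ``averaging over $U_0$'' cannot rescue convexity. Your zero-mean computation does establish (a) under the additional condition $\E[U_0]=0$, and more generally whenever $\theta^T\Pi\theta+\mu_0^T\theta+1\geq 0$ for all $\theta$ (e.g.\ $\mu_0^T\Pi^{-1}\mu_0\leq 4$ for invertible $\Pi$), but some such restriction on $\mu_0$ must be added to the hypotheses; it cannot be derived from them.
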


This theorem allows us to extend the known convexity results to losses
that are not strongly convex, and to performative effects with
arbitrary magnitude.

\begin{figure}[hbtp]
    \centering \includegraphics[width=\textwidth]{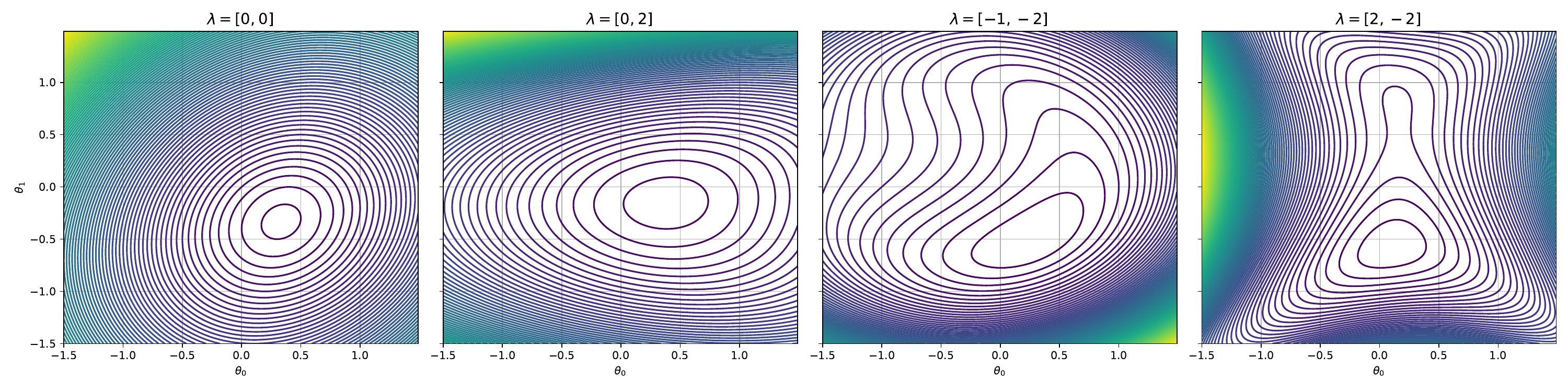}
    \caption{Profile risk for classifying two Gaussian centered in $\mu_0 = (0,0)$ and $\mu_1=(-1, 1)$ with quadratic loss and various values of $\lambda$ for the diagonal coefficients of $\Pi$. The performative risk remains convex as long as $\Pi$ is positive
    semidefinite i.e. $\lambda\geq 0$, and becomes non-convex whenever some of the $\lambda_i$ are negative.}
 \end{figure}

\begin{remark}[Generalization to Performative Effect Affecting Both Classes]
    One could remove \cref{assum:identity_function} to allow class $1$ to change
    under performative effect. The convexity of $\PR(\theta)$ remains
    if $\varphi_1(u;\theta) = u - \Pi_1\theta$, where $\Pi_1$ is  a positive
    semidefinite matrix. Similarly, the classification task becomes harder with performative
    effects and the performative risk is convex.
\end{remark}

\section{Connection with Robustness and Regularization}
\label{sec:robustness}

In order to enforce strong convexity of the loss function $\ell(\cdot;\theta)$, previous
works on performative prediction have considered the use of an additional regularization
term ---see, e.g., Section 5.2 of \citep{perdomo_performative} where logistic
regression is used with a ridge regularizer. When doing so, it has been observed empirically
that the retraining method performs quite well. To build on this observation, we show below
that for linear-in-the-parameter performative effects that tends to make the
classification task harder, the performative optimum may indeed be interpreted as
a regularized version of the base classification problem. This regularization does
not take the form of and additive penalty but can be interpreted as the solution of a specific
adversarially robust classification objective. In this section, we use the slightly stronger assumption that $\Pi$ is
a symmetric positive definite matrix, in order to ensure
that both $\|v\|_\Pi = (v^T\Pi v)^{1/2}$ and $\|v\|_{\Pi^{-1}} = (v^T\Pi^{-1} v)^{1/2}$
are norms on $\mathbb{R}^d$.

\begin{theorem}[Variational Formulation of the performative Risk]
    \label{thm:variational_form_performative_risk}
    Under \cref{assum:identity_function,assum:fixed_probability,assum:drift_operator},
    for linearly parameterized classifiers $f_\theta(x)=x^T\theta$ and
    linear shift operators $\Pi(\theta) = \Pi \theta$, and assuming that $\Phi$ is a
    convex non increasing function and that $\Pi$ is symmetric positive definite, the
    performative risk may be rewritten as
    \begin{equation} \label{eq:adversarial_formulation} \operatorname{PR}(\theta)
    = \rho \E[\Phi(U_1^T\theta)] + (1-\rho)\E\left[\max_{\{\Delta U_0
    : \|\Delta
    U_0\|_{\Pi^{-1}} \leq \|\theta\|_{\Pi}\}}\Phi(-(U_0+\Delta
    U_0)^T\theta)\right] .
  \end{equation}
  \label{thm:robust}
\end{theorem}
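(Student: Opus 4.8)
The plan is to start from the expression of the performative risk obtained by setting $\theta'=\theta$ in \cref{eq:DPR_classification_shift}, namely
\[
\operatorname{PR}(\theta) = \rho\,\E[\Phi(U_1^T\theta)] + (1-\rho)\,\E\bigl[\Phi(-(U_0+\Pi\theta)^T\theta)\bigr],
\]
and to rewrite the single deterministic displacement $\Pi\theta$ inside the second term as the maximiser of a variational problem over admissible perturbations $\Delta U_0$. The first term involves no performative effect and is carried over verbatim, so all the work is in the class-$0$ term.

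The key algebraic observation is that, since $\Phi$ is non-increasing, $\Phi(-(U_0+\Delta U_0)^T\theta)$ is non-decreasing in $(\Delta U_0)^T\theta$; hence maximising $\Phi(-(U_0+\Delta U_0)^T\theta)$ over the set $\{\Delta U_0:\|\Delta U_0\|_{\Pi^{-1}}\le \|\theta\|_\Pi\}$ is equivalent to maximising the linear functional $(\Delta U_0)^T\theta$ over that ellipsoidal ball. This is a standard dual-norm computation: the support function of the ball $\{v:\|v\|_{\Pi^{-1}}\le R\}$ evaluated at $\theta$ is $R\,\|\theta\|_\Pi$, since the norm dual to $\|\cdot\|_{\Pi^{-1}}$ is $\|\cdot\|_\Pi$. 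With $R=\|\theta\|_\Pi$ this gives $\max (\Delta U_0)^T\theta = \|\theta\|_\Pi^2 = \theta^T\Pi\theta$, and the maximiser is $\Delta U_0^\star = \Pi\theta$, which indeed satisfies $\|\Pi\theta\|_{\Pi^{-1}}^2 = (\Pi\theta)^T\Pi^{-1}(\Pi\theta) = \theta^T\Pi\theta = \|\theta\|_\Pi^2$, so it lies on the boundary and is admissible. Therefore the inner maximum equals $\Phi(-U_0^T\theta - \theta^T\Pi\theta) = \Phi(-(U_0+\Pi\theta)^T\theta)$ pointwise in $U_0$. Taking $(1-\rho)\,\E[\cdot]$ of both sides and adding the class-$1$ term yields \cref{eq:adversarial_formulation}. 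One should also remark that the positive definiteness of $\Pi$ is exactly what makes $\|\cdot\|_\Pi$ and $\|\cdot\|_{\Pi^{-1}}$ genuine norms and what guarantees the two are dual to each other (via the change of variables $w=\Pi^{1/2}v$), so the assumption is used precisely here.

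The main obstacle is not conceptual but one of care: one must justify swapping the expectation over $U_0$ with the pointwise maximisation, i.e., that $\E[\max_{\Delta U_0}\Phi(-(U_0+\Delta U_0)^T\theta)]$ makes sense and equals $\E$ of the pointwise maximiser's value. Because for each fixed $U_0$ the maximiser $\Delta U_0^\star=\Pi\theta$ does not depend on $U_0$, the ``$\max$'' is attained at the same point for every $\omega$, so the integrand on the right-hand side of \cref{eq:adversarial_formulation} is simply the measurable function $U_0\mapsto\Phi(-(U_0+\Pi\theta)^T\theta)$; there is no measurable-selection subtlety, and integrability is inherited from that of the original class-$0$ term. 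A secondary point worth stating explicitly is that $\Phi$ non-increasing is what lets us replace the maximisation of the composed function $\Phi(-\cdot)$ by the maximisation of the inner linear form; if $\Phi$ were not monotone this reduction would fail. With these remarks in place the identity follows directly.
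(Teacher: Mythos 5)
Your proposal is correct and follows essentially the same route as the paper's proof: use the monotonicity of $v\mapsto\Phi(-v)$ to reduce the inner maximization to maximizing the linear form $(\Delta U_0)^T\theta$ over the $\|\cdot\|_{\Pi^{-1}}$-ball, identify the maximizer $\Delta U_0^\star = \Pi\theta$ with value $\|\theta\|_\Pi^2$, and recognize the resulting expectation as the class-$0$ term of the performative risk. Your additional remarks on the dual-norm computation and the absence of measurable-selection issues (since the maximizer is independent of $U_0$) are correct elaborations of steps the paper states more tersely.
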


Intuitively, for a classification-calibrated loss function
\citep{bartlett_risk_2006,bach2024} and classes with identical
covariances, we expect $\theta$ to align with the direction of
$\mu_1(\theta)-\mu_0(\theta)$, so that, when $\Pi$ is positive definite, the
performative shift $\Pi\theta$ has itself a positive dot product with
$\mu_1(\theta)-\mu_0(\theta)$. The reformulation of the
performative risk in~\cref{eq:adversarial_formulation} formalizes
this intuition by showing that the performative optimum is associated
to an adversarially robust classification task
\citep{GoodfellowSS14,madry2018towards,ribeiro2023regularization} in which the
points of class $0$ are allowed to shift towards those of class
$1$, so as to increase the overall loss. Compared to objectives found in the robust
classification literature, the specificity of~\cref{eq:adversarial_formulation} lies
in the fact that the tolerance (or budget) on the adversarial displacement $\Delta U_0$
depends on both $\Pi$ and $\theta$.

To understand the role played by the $\|\cdot\|_\Pi$ and $\|\cdot\|_{\Pi^{-1}}$
norms, consider the particular case where only a subset of the variables have a
performative effect, i.e., if we let $\theta$ and $U_0$ be partitioned
into
\[
  \theta= \begin{pmatrix} \theta_p \\ \dots \\ \theta_s\end{pmatrix} \, \text{and } \, U_0 = \begin{pmatrix} U_{0,p} \\ \dots \\ U_{0,s}\end{pmatrix} ,
\]
with $\Pi_p = \gamma I$ and $\Pi_s = \epsilon I$, one obtains,
letting $\epsilon$ tend to zero, that the performative risk is equal to
\[
    \operatorname{PR}(\theta) = \rho \E[\Phi(U_1^T\theta)] + (1-\rho)\E\left[\max_{\{\Delta U_{0,p} : \|\Delta U_{0,p}\| \leq \gamma \|\theta_p\|\}}\Phi\left(-(U_{0,s}^T\theta_s + (U_{0,p}+\Delta U_{0,p})^T\theta_p)\right)\right] .
\]
The above expression shows that in this case, only the coordinates subject to the
performative effect appear in the adversarial reformulation. 

In the proof of Theorem~\ref{thm:robust} (see \cref{app:proofrob}), we
observe that the second term of~\cref{eq:adversarial_formulation} may
also be rewritten as $(1-\rho)\E[\Phi(-U_0^T\theta
- \|\theta\|^2_\Pi)]$. Similarly to the case studied by \cite{ribeiro2023regularization},
the term $\|\theta\|^2_\Pi$ that appears inside the surrogate loss function $\Phi$
has a regularization effect. Note however that it is not equivalent to the use of a standard ridge
regression penalty on $\theta$. The following theorem provides a bound on the performative
optimum that highlights the role played by $\Pi$ on the significance of this
regularization effect. 

\begin{theorem}[Regularization Bound]\label{thm:regularization}
  Define $\mu_i = \E[U_i]$. Under \cref{assum:identity_function,assum:fixed_probability,assum:drift_operator},
  for linearly parameterized classifiers $f_\theta(x)=x^T\theta$ and
  linear drift operators $\Pi(\theta) = \Pi \theta$, when $\Phi$ is a
  convex non increasing function and $\Pi$ a symmetric positive definite matrix,
  the minimizer $\theta^*$ of $\operatorname{PR}(\theta)$ satisfies the following condition.
  \begin{align}
    \|\theta^*\|_\Pi  \leq \frac{\| \Pi^{-\frac{1}{2}} (\rho \mu_1 - (1-\rho)\mu_0) \|}{1-\rho} .
    \label{eq:regularization_bound}
  \end{align}
\end{theorem}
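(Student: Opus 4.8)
The plan is to work directly with the closed form of the performative risk rather than with optimality conditions. Starting from \eqref{eq:DPR_classification_shift} with $\theta'=\theta$, $f_\theta(x)=x^T\theta$ and $\Pi(\theta)=\Pi\theta$, one has $f_\theta(U_0+\Pi\theta)=U_0^T\theta+\theta^T\Pi\theta=U_0^T\theta+\|\theta\|_\Pi^2$, so
\[
\operatorname{PR}(\theta)=\rho\,\E\!\left[\Phi(U_1^T\theta)\right]+(1-\rho)\,\E\!\left[\Phi\!\left(-U_0^T\theta-\|\theta\|_\Pi^2\right)\right],
\]
which is exactly the expression recorded in the discussion following Theorem~\ref{thm:robust}. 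In particular $\operatorname{PR}(0)=\Phi(0)$, and since $\theta^*$ is a minimizer of $\operatorname{PR}$ we have $\operatorname{PR}(\theta^*)\le\operatorname{PR}(0)=\Phi(0)$; this single inequality is all the minimality we will use.

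The key step is to linearize $\Phi$ from below at $0$. By convexity, the supporting-line inequality gives $\Phi(s)\ge\Phi(0)+\Phi'(0)\,s$ for all $s\in\R$ (more generally, $\Phi'(0)$ may be replaced by any subgradient of $\Phi$ at $0$), and because $\Phi$ is non-increasing this slope satisfies $\Phi'(0)\le 0$; for the losses of interest (hinge, logistic, exp) it is in fact strictly negative. Applying this bound inside each expectation with $s=U_1^T\theta^*$ and $s=-U_0^T\theta^*-\|\theta^*\|_\Pi^2$, and taking expectations with $\mu_i=\E[U_i]$, yields
\[
\operatorname{PR}(\theta^*)\ \ge\ \Phi(0)+\Phi'(0)\!\left[(\rho\mu_1-(1-\rho)\mu_0)^T\theta^*-(1-\rho)\|\theta^*\|_\Pi^2\right].
\]
Combining with $\operatorname{PR}(\theta^*)\le\Phi(0)$ and dividing through by $-\Phi'(0)>0$ gives the quadratic-in-$\theta^*$ inequality $(1-\rho)\|\theta^*\|_\Pi^2\le(\rho\mu_1-(1-\rho)\mu_0)^T\theta^*$.

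To finish, write $v=\rho\mu_1-(1-\rho)\mu_0$ and bound the right-hand side by Cauchy–Schwarz in the $\Pi$-geometry: $v^T\theta^*=(\Pi^{-1/2}v)^T(\Pi^{1/2}\theta^*)\le\|\Pi^{-1/2}v\|\,\|\Pi^{1/2}\theta^*\|=\|\Pi^{-1/2}v\|\,\|\theta^*\|_\Pi$, where the hypothesis that $\Pi$ is symmetric positive definite guarantees that $\Pi^{\pm1/2}$ are well defined and $\|\cdot\|_\Pi$ is a norm. Plugging this in gives $(1-\rho)\|\theta^*\|_\Pi^2\le\|\Pi^{-1/2}v\|\,\|\theta^*\|_\Pi$; dividing by $\|\theta^*\|_\Pi$ (the bound being trivial if $\theta^*=0$) produces exactly \eqref{eq:regularization_bound}.

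The computation is short, so there is no real ``hard part''; the points that need care are (i) ensuring that a strictly negative subgradient of $\Phi$ at $0$ exists — which is why the hypothesis that $\Phi$ is a genuinely decreasing convex surrogate (and not, e.g., flat near $0$) is needed, and why the conclusion would fail for a constant $\Phi$ — and (ii) the implicit existence of the minimizer $\theta^*$, which follows from convexity of $\operatorname{PR}$ (Theorem~\ref{thm:convex_performative_classif}) together with the coercivity provided by the $\|\theta\|_\Pi^2$ term inside $\Phi$ when $\Phi(s)\to+\infty$ as $s\to-\infty$. Everything else is the supporting-hyperplane inequality for convex functions and one Cauchy–Schwarz estimate.
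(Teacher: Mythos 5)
Your proof is correct, but it reaches the key intermediate inequality $(1-\rho)\|\theta^*\|_\Pi^2 \leq (\rho\mu_1-(1-\rho)\mu_0)^T\theta^*$ by a different route than the paper. The paper starts from the same observation $\operatorname{PR}(\theta^*)\leq\operatorname{PR}(0)=\Phi(0)$, but then applies Jensen's inequality twice --- first to push the expectations inside $\Phi$ (giving $\rho\Phi(\mu_1^T\theta^*)+(1-\rho)\Phi(-\mu_0^T\theta^*-\|\theta^*\|_\Pi^2)$), then to the two-point mixture with weights $(\rho,1-\rho)$ --- and finally inverts the resulting inequality $\Phi(0)\geq\Phi(s)$ using monotonicity of $\Phi$ to conclude $s\geq 0$. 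You instead linearize $\Phi$ from below via the subgradient inequality at $0$ and divide by a strictly negative slope. Your version is arguably more transparent (everything reduces to one linear lower bound plus linearity of expectation), but it genuinely requires a strictly negative subgradient $g\in\partial\Phi(0)$, which does not follow from ``convex non-increasing'': e.g.\ $\Phi(s)=s^2\mathbf{1}_{\{s<0\}}$ is convex, non-increasing and non-constant, yet $\partial\Phi(0)=\{0\}$ and your linearization degenerates, while the paper's double-Jensen argument still goes through there (its only implicit strictness requirement is that $\Phi$ not be flat on any interval $[s,0]$ with $s<0$ --- a condition the paper itself glosses over when asserting ``since $\Phi$ is non-increasing, we must have $0\leq\ldots$''). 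Both arguments therefore silently strengthen the stated hypothesis, but your strengthening is the stronger of the two; you flag this honestly, and it is harmless for the hinge, logistic and exponential losses the theorem targets. The final Cauchy--Schwarz step in the $\Pi^{1/2}$-coordinates is identical to the paper's.
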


Theorem~\ref{thm:regularization} shows that the performative optimum has a smaller
value in $\|\cdot\|_\Pi$ norm when the performative effect is stronger,
that is, when $\Pi$ gets larger. In the particular case where $\Pi = \gamma I$, \cref{eq:regularization_bound} rewrites as $\gamma^{1/2} \|\theta^*\| \leq \gamma^{-1/2} \|\rho \mu_1 - (1-\rho)\mu_0 \|/(1-\rho)$ and thus larger values of $\gamma$ decrease the r.h.s. while the l.h.s. increases for identical values of $\|\theta^*\|$, showing that $\|\theta^*\|$ has to decrease to zero. %

\section{Experiments}
\label{sec:expe}
In this section\footnote{The code is available at \url{https://github.com/totilas/PerfOpti}}, we test the performance of our algorithm
Reparametrization-based Performative Gradient (RPPerfGD) with respect
to existing algorithms. Three baselines were introduced
in \cite{perdomo_performative}. First, \emph{Repeated Risk
Minimization (RRM)} computes at each step the next $\theta$ to
minimize the non-performative risk, leading to the update rule
$\theta^{t+1} = \argmin_{\theta'} \DPR(\theta^t, \theta')$. In
practice, we found that this algorithm is unstable as soon as
performative effects become significant. We thus report separately the results
obtained with this algorithm in \cref{app:rrm}. A second baseline
is \emph{Repeated Gradient Descent (RGD)}, which ignores the
performative effect but limits itself to a gradient step towards this
minimization $\theta^{t+1} = \theta^t
- \eta \left.\nabla_{\theta^{\prime}} \DPR\left(\theta, \theta' \right)\right|_{\theta^{\prime}=\theta}$. In
numerical experiments, it is often chosen to add a regularizer to the
objective function, which is particularly interesting in the context
of performative learning, as discussed
in \cref{sec:robustness}. Hence, we report \emph{Regularized Repeated
Gradient Descent (RRGD)}, that corresponds to the repeated gradient
with a loss function including an additional ridge penalty on $\theta$, leading to a more
conservative behavior that is also more robust to performative effects. Finally, we
also compare to the \emph{Score Function Performative Gradient Descent
(SFPerfGD)} which estimates the performative part of the gradient
using the $\hat{G}_\theta^{\text{RP}}$ estimator based on the
score-function approach (see \cref{sec:pushforward}).
This was previously used in small dimensions in \cite{miller_outside}.

\begin{figure}[htb]
    \centering
    \begin{subfigure}[b]{0.31\textwidth}
        \centering
        \includegraphics[trim={0 0 15cm 0},clip,width=\textwidth]{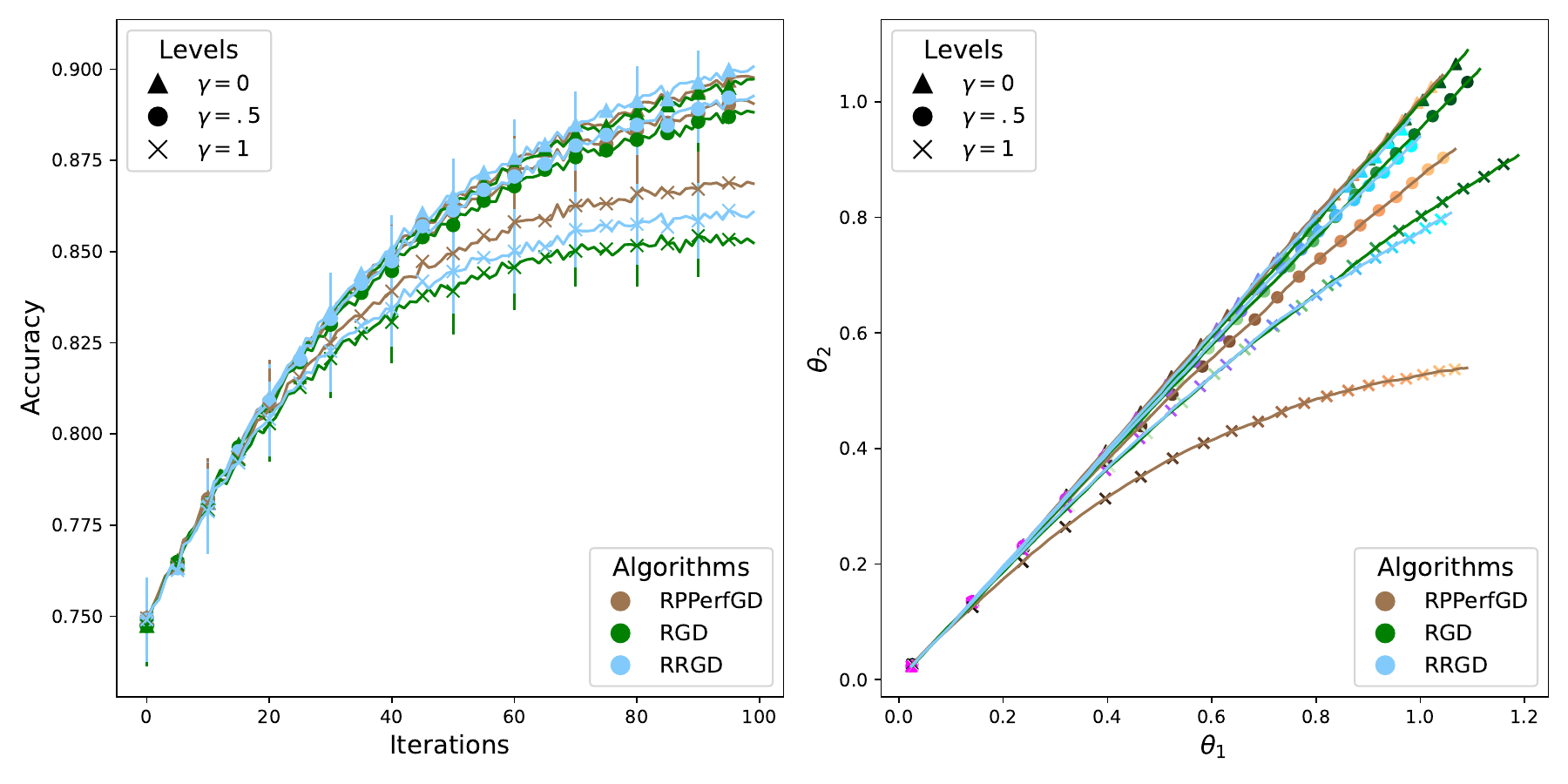}
        \caption{}
        \label{fig:log2dtraj}
    \end{subfigure}
    \hfill
    \begin{subfigure}[b]{0.3\textwidth}
        \centering
        \includegraphics[trim={15.6cm 0 0 0},clip,width=\textwidth]{log2dtraj2.pdf}
        \caption{}
        \label{fig:log2dtraj}
    \end{subfigure}
    \hfill
    \begin{subfigure}[b]{0.3\textwidth}
        \centering
        \includegraphics[width=\textwidth]{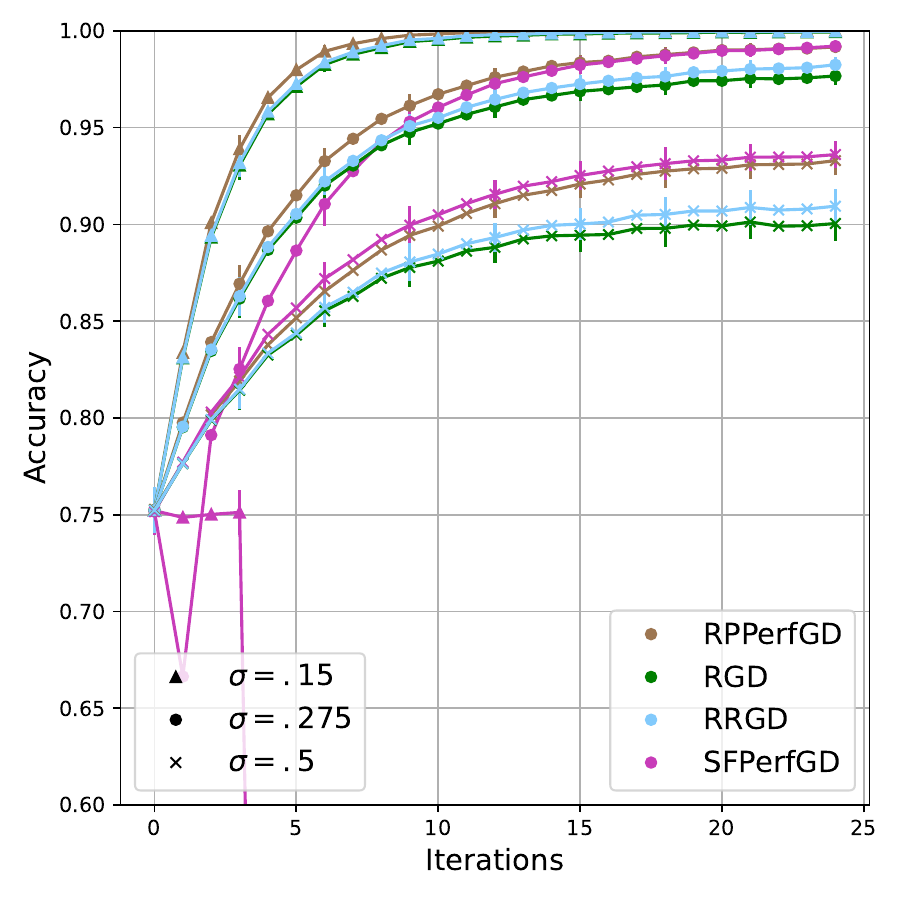}
        \caption{}
        \label{fig:sfvsrp}
    \end{subfigure}
    \hfill
    \begin{subfigure}[b]{0.31\textwidth}
        \centering
        \includegraphics[width=\textwidth]{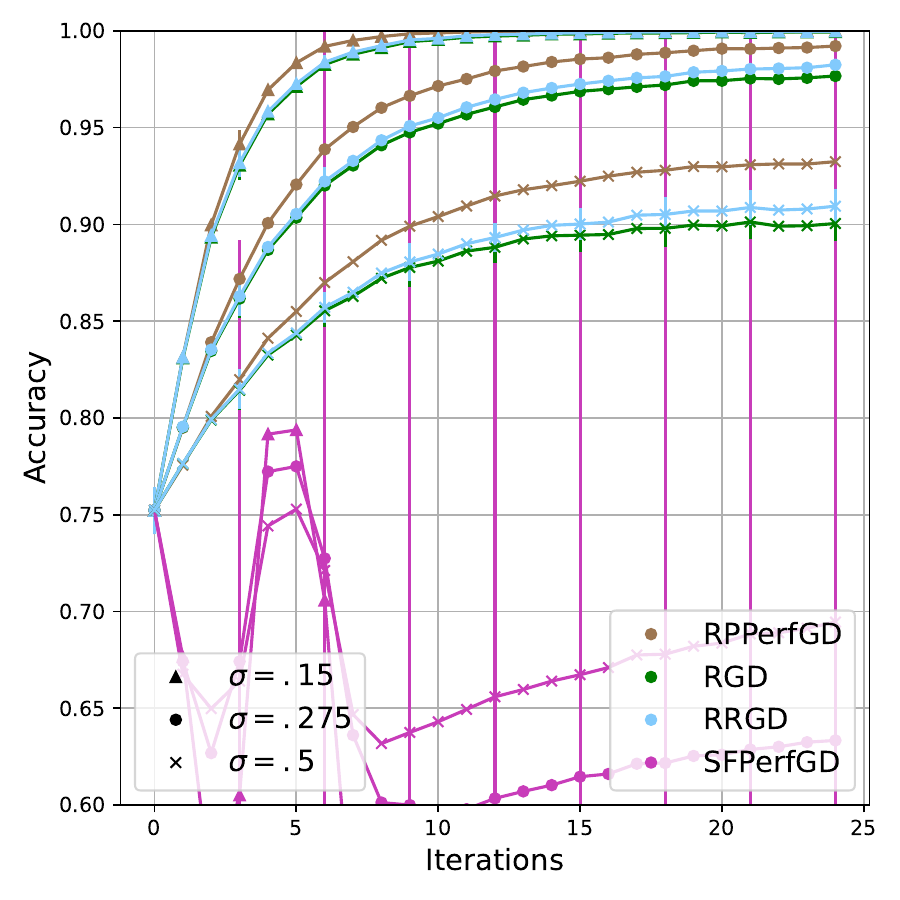}
        \caption{}
        \label{fig:withlearning}
    \end{subfigure}
    \hfill
    \begin{subfigure}[b]{0.31\textwidth}
        \centering
        \includegraphics[width=\textwidth]{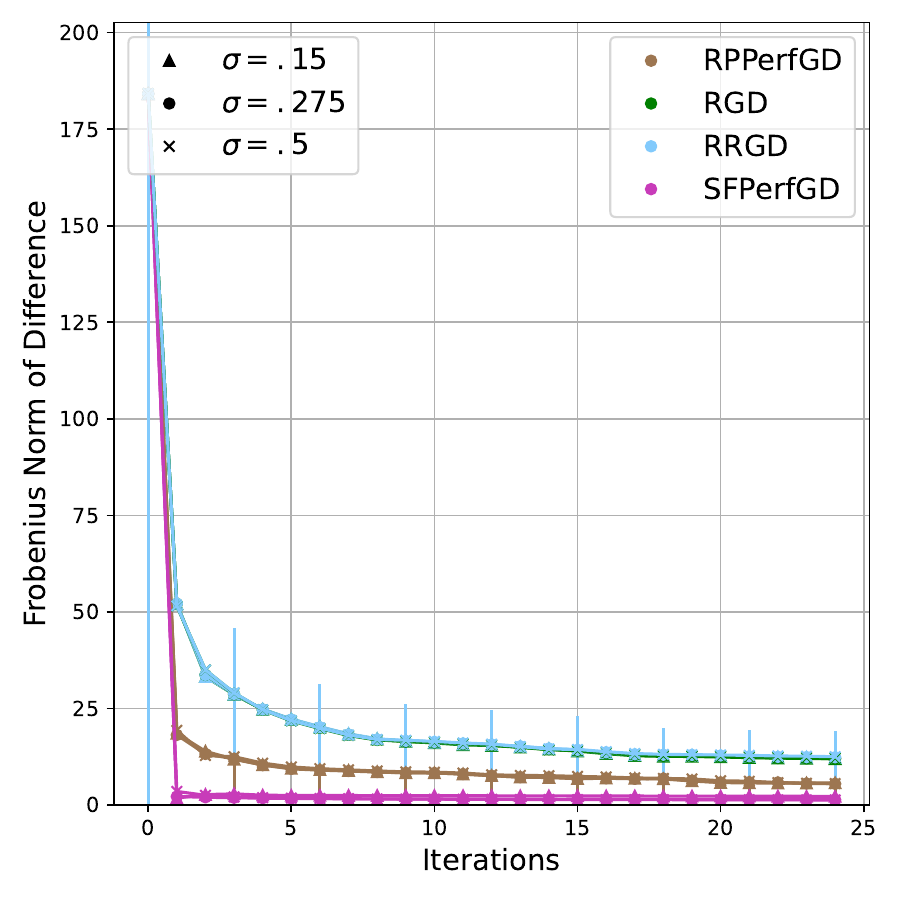}
        \caption{}
        \label{fig:norm}
    \end{subfigure}
    \hfill
    \begin{subfigure}[b]{0.31\textwidth}
        \centering
        \includegraphics[width=\textwidth]{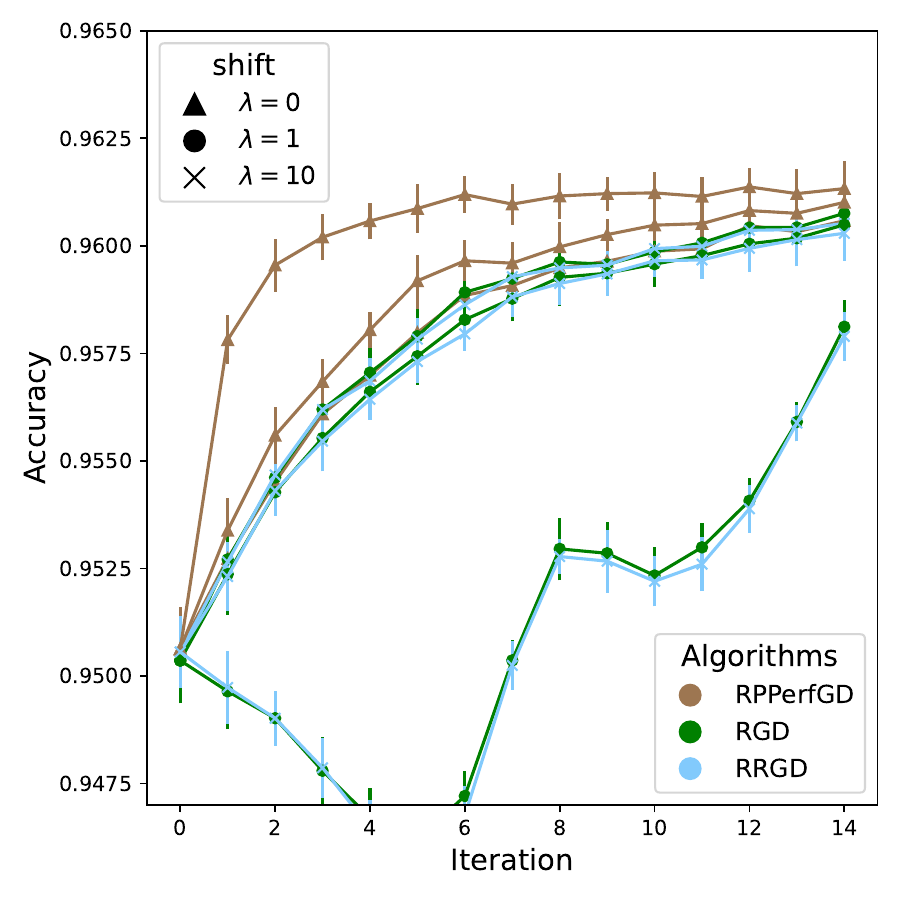}
        \caption{}
        \label{fig:houses}
    \end{subfigure}
    \caption{\textbf{(a)} Logistic regression to classify two Gaussian
    distributions centered in $(0,0)$ and $(-1, -1)$ and different
    magnitudes of performative effects $\gamma$. We report
    the accuracy for three different magnitudes of the performative
    effects, from no performative effect ($\gamma = 0$) to a strong
    one ($\gamma=1$). \textbf{(b)} we report the position of
    the parameter $\theta$ in its 2D-space, starting from $(0,0)$ and
    following different paths depending on the algorithm. \textbf{(c)}
    Accuracy of a classification with quadratic loss on two Gaussian
    distributions of dimension $7$ with various levels of variance
    $\sigma$ of the distributions. \textbf{(d)} Same experiments but using the learnt $\Pi$ for RPPerfGD. \textbf{(e)} In this case, distance between the true matrix $\Pi$ and the estimated version. Note that in RGD and RRGD the estimation of $\Pi$ is not used in the algorithm. \textbf{(f)} Logistic regression for the Housing dataset with various magnitude of performative shift $\lambda$ on the coordinates
    $0$, $4$ and $6$. Accuracy is averaged over $20$
    runs.}
\end{figure}

\paragraph{Influence of the performative effect}
In \cref{fig:log2dtraj}, we illustrate how taking into account the
performative effect allows to mitigate regimes with strong
distribution shift. We generate two Gaussian distributions, one fixed
for class $1$ and one moving with mean $\mu(\theta)^T = -(1,1)
+ \gamma \theta^T \operatorname{diag}(0.1, 0.9)$, where $\gamma$ is
the magnitude parameter effect. We learn a logistic regression and
report the accuracy of the predictions as well as the trajectory of
the parameter in $(\theta_1, \theta_2)$-space. As expected, when there
is no performative effect ($\gamma = 0$), all methods are
equivalent. As soon as there are performative effects, Performative
gradient takes advantage over other methods. RRGD proposes an
interesting tradeoff in terms of performance: agnostic to the
performative effect, it still moderates its value and thus the
magnitude of the performative effect.

\paragraph{Stability of the estimator}
In \cref{fig:sfvsrp}, we illustrate the result
of \cref{ex:gaussian_mean_estimation}, by training a classifier
with the square loss, showing that the score-based estimator used in
SFPerfGD becomes unstable in high dimensions. We use Gaussian
distributions of dimension $7$ with two dimensions subject to
performative effects. We vary the variance of the distributions. When
the scale $\sigma$ is small, the variance of the estimator increases
to the point of making learning impossible with unstable trajectories
of the parameter $\theta$. Even when the scale is small enough to
ensure convergence, RPPerfGD provides faster convergence illustrating
its better scalability for high dimensions.

\paragraph{Estimation of $\Pi$}  We estimate $\Pi$ in \cref{fig:withlearning} and \cref{fig:norm} by running a ridge regression along the successive deployments of the model as described in Algorithm~\ref{algo:pilearning}: the ridge penalty ensures that initially the estimate of $\Pi$ is close to zero making the RPPerfGD updates very similar to those of RGD and it is easy to check that order $d$ deployments are enough to obtain a non void estimate of $\Pi$. While this plug-in approach is not guaranteed to converge from a theoretical standpoint, we observe results that very similar to the case where $\Pi$ is fully known.

\paragraph{Houses price prediction}
To simulative performative effects from a dataset, we follow the
methodology of \cite{perdomo_performative}, by shifting the coordinate
$i$ of a factor $\lambda \theta_i$ if the $i$-th coordinate could be
easily modified, and keeping its real value intact otherwise. We use
the binarized version of the Housing
dataset\footnote{\url{https://www.openml.org/d/823}}, where the
outcome is whether the price is high or not. Assuming that a seller
wants to obtain a high price, the high price is the favored
class. Some characteristics are harder to tamper with such as the
location or the income, whereas other can be slightly adjusted such as
the household and the number of bedrooms (a room could be promoted
bedroom). Coordinates $0$, $4$ and $6$ are thus shifted while other
remains identical. We see that when the magnitude of the shift
increases, RPPerfGD outperforms RGD. In particular, it seems that
RPPerfGD succeeds in converging faster than the non performative
approach.

\begin{algorithm}
    \DontPrintSemicolon
    \SetKwInOut{Input}{Input}
    \SetKwInOut{Output}{Output}
    \Input{Stepsize $\eta$, regularizer $\lambda$, starting $\theta_0$, Loss $\ell$}
    \Output{Parameters $\theta_{K}$ and diagonal matrix $\Pi_{K}$}
    $\Pi_0 \gets 0_{d \times d}$ \tcp{initialize $\Pi$ as a zero matrix of size $d \times d$}
    
    \For{$k \in \{0, \ldots, K-1\}$}{
        Receive $n$ samples $\{x_k^i\}_{i=1}^n \sim D(\theta_k)$ with $n_{0}$ samples of label $-1$ denoted $(x_{0, k})_k$\;
        Compute $\nabla_1 \gets \frac{1}{n} \sum_{i=1}^n \nabla_{\theta} \ell(x_k^i, \theta_k)$\tcp{Non performative part of the gradient}
        Compute $\nabla_2 \gets \frac{1}{n} \Pi_k^{\top} \sum_{i=1}^{n_0} \nabla_{x} \ell(x_{0,k}^{i}, \theta_k)$\tcp{Performative part of the gradient over negative samples}
        $\theta_{k+1} \gets \theta_k - \eta (\nabla_1 + \nabla_2)$ \tcp{Gradient Descent step}
        $\Pi_{k+1} \gets \operatorname{argmin} \sum_{j=1}^k \sum_{l=1}^{n_0} \| x_{0,j}^{l} - \hat{\mu} - \Pi \theta_j \|^2 + \lambda \| \Pi \|^2$  \tcp{$\hat{\mu}$ is the estimated mean of the class}
    }
    \caption{RPPerfGD with $\Pi$ learning}
    \label{algo:pilearning}
    \end{algorithm} %

\section{Conclusion}
In this work, we have investigated the consequences of assuming a novel,
more explicit, model for performative effects under the form of a push-forward
shift of distribution. We have demonstrated that it comes with practically important
consequences, such as enabling more reliable performative gradient
estimation in large dimensional models.

In the classification case, we
observed that when the change of distribution is given by a linear-in-parameters shift, the performative risk
is convex under relatively general assumptions.
It would be interesting to study how these results may extend to non-linear models for the performative effect.

Finally, we have shown that certain kinds of performative effect induce implicit regularization
of the risk minimization problem. Moreover, this regularization effect can alternatively be
viewed through the lens of adversarial robustness. %
It would be useful to explore whether this reformulation can be used to optimize the performative risk
without an explicit model for the performative effect. 

\section{Acknowledgments}
This work was supported by grants ANR-20-THIA-0014 program
“AI PhD@Lille” and ANR-22-PESN-0014 under the France 2030 program. The authors thank Francis Bach, Bruno Loureiro
and Kamélia Daudel for their helpful insights.

\bibliographystyle{plainnat} 
\bibliography{biblio}

\appendix
\section{Supplementary Material}
\label{sec:app}

\subsection{Proofs for Example~\ref{ex:gaussian_mean_estimation}}
\label{app:proofvar}
Using the notations of Example~\ref{ex:gaussian_mean_estimation} the score function estimator may we written as $\Pi^T \frac{1}{2\sigma^2}G$, where $G = \|U+a\|^2 U$ with $U \sim
\mathcal{N}(0,\sigma^2 I_d)$ and $a \in \mathbb{R}^d$ is a deterministic vector depending on the parameters $\theta, \theta'$ and the performative effect. To compute the expectation and covariance matrix of $G$ we will use Isserlis' (or Wick's probability) theorem which state that
\begin{enumerate}[label=(\alph*)]
    \item $\E[U_{i_1} \dots U_{i_{2m+1}}] = 0$, for any $\{i_1, \dots, i_{2m+1}\} \in \{1, \dots, d\}^{2m+1}$
    \item $$\E[U_{i_1} \dots U_{i_{2m}}] = \sigma^{2m} \sum_{\{j_1, k_1\}, \dots, \{j_m, k_m\} \in \mathcal{P}(\{i_1, \dots, i_{2m}\})} \delta_{j_1 k_1} \dots \delta_{j_m k_m}$$
    where $\mathcal{P}(\{i_1, \dots, i_{2m}\})$ denotes all the distinct ways of partitioning $\{i_1, \dots, i_{2m}\}$ into non-overlapping (unordered) pairs and $\delta$ is the Kronecker delta. It is easily checked that the number of partitions in $\mathcal{P}(\{i_1, \dots, i_{2m}\})$ is equal to ${2m \choose m}\frac{m!}{2^m}$ which is also equal to the product of all odd numbers between 1 and $2m-1$.
\end{enumerate}

For the expectation, $\E[G] = \E[(\|U\|^2 + \|a\|^2 + 2 a^T U)U] = 2 \E(UU^T) a = 2 \sigma^2 a$, as the expansion of all other terms would involve and odd number of coordinates of $U$.

Let $M = \E[G G^T]$, we have
\begin{multline*}
    M_{ij} = \E \left[ \sum_{k=1}^d (U_k+a_k)^2 \sum_{l=1}^d (U_l+a_l)^2 U_i U_j \right] \\
    = \E \left[ \sum_{k=1}^d \sum_{l=1}^d \left( U_k^2 U_l^2 + U_k^2 a_l^2 + U_l^2 a_k^2 + 4 a_l a_k U_k U_l + a_k^2 a_l^2 \right) U_i U_j \right]
\end{multline*}
omitting terms in the expansion that involve and odd number of coordinates of $U$ (which have 0 expectation). Now, we apply Isserlis' theorem to each term in this decomposition, starting with the lowest order (rightmost) ones:
\begin{align*}
    & \E \left[\sum_{k=1}^d \sum_{l=1}^d a_k^2 a_l^2 U_i U_j\right] = \|a\|^4 \sigma^2 \delta_{ij} \\
    & \E \left[\sum_{k=1}^d \sum_{l=1}^d 4 a_l a_k U_k U_l U_i U_j\right] = 4 \sigma^4 \sum_{k=1}^d \sum_{l=1}^d a_l a_k (\delta_{ij}\delta_{kl} + \delta_{ik}\delta_{jl} + \delta_{il}\delta_{jk}) = 4 \sigma^4 (\|a\|^2 + 2 a_i a_j) \delta_{ij} \\
    & \E \left[\sum_{k=1}^d \sum_{l=1}^d U_k^2 U_i U_j a_l^2\right] = \sigma^4 \|a\|^2 \left(\sum_{k=1}^d (\delta_{ij} + 2 \delta_{ki}\delta_{kj})\right) =  \sigma^4 \|a\|^2 (d+2) \delta_{ij} \\
    & \E \left[\sum_{k=1}^d \sum_{l=1}^d U_k^2 U_l^2 U_i U_j\right] = \sigma^6 \left( \delta_{ij} + 2 \delta_{il}\delta_{jl} + 2 \delta_{ik}\delta_{jk} + 2 \delta_{kl}\delta_{ij} + 8 \delta_{kl}\delta_{ki}\delta_{ij}\right) = \sigma^6 (d^2 + 6 d + 8) \delta_{ij}
\end{align*}
where the last decomposition is obtained by examination of the ${6\choose 3} \frac{3!}{2^3}=15$ possible partitions of $\{k,k,l,l,i,j\}$ in 3 pairs of indices. Putting all together, one obtains
\[
    M = \left((d^2 + 6 d + 8)\sigma^6 + 2 (d+4) \|a\|^2 \sigma^4 + \|a\|^4 \sigma^2 \right) I_d + 8 \sigma^4 a a^T
\]
which yields
\begin{equation}
  \Cov(G) = \left((d^2 + 6 d + 8)\sigma^6 + 2 (d+4) \|a\|^2 \sigma^4 + \|a\|^4 \sigma^2 \right) I_d + 4 \sigma^4 a a^T
  \label{eq:CovG}    
\end{equation}

Subtracting a scalar baseline $m$ yields the estimator $\tilde{G} = (\|U+a\|^2-m)U$ which has the same expectation as $G$. In terms of covariances, one has
\begin{equation*}
  \Cov(\tilde{G}) = \Cov(G) + m^2 I_d -2m \E\left(\|U+a\|^2 U U^T\right)
\end{equation*}
The rightmost expression, when expanded, features terms have already been met in the computation above and it is easy to check that
\[
    \E\left(\|U+a\|^2 U U^T\right) = ((d+2)\sigma^4 + \|a\|^2\sigma^2)I_d
\]
Hence,
\[
    \Cov(\tilde{G}) = \Cov(G) + \left(m^2 -2m((d+2)\sigma^4 + \|a\|^2\sigma^2)\right) I_d
\]
In the above equation, the scalar term $m^2 -2m((d+2)\sigma^4 + \|a\|^2\sigma^2)$ is minimized by choosing $m=(d+2)\sigma^2 + \|a\|^2$ and is equal to $-\left((d+2)\sigma^2+\|a\|^2\right)^2\sigma^2$, which, combined with~\cref{eq:CovG} yields 
\begin{equation}
  \Cov(\tilde{G}) \geq \left(2(d+2)\sigma^6 + 4 \|a\|^2 \sigma^4 \right) I_d + 4 \sigma^4 a a^T
  \label{eq:CovGtilda}  
\end{equation}

\subsection{Proof of Theorem~\ref{thm:convex_performative_classif}}
\label{app:proofconv}
\begin{proof}
  For (a), \cref{eq:DPR_classification_shift} may be rewritten as
  \[
    \operatorname{PR}(\theta) = \rho \E[(U_1^T\theta-1)^2] + (1-\rho)\E[((U_0+\Pi\theta)^T\theta+1)^2]
  \]
  Denoting $\E(U_i) = \mu_i$ and $\operatorname{Cov}_\theta(U_i) = \Sigma_i$, for $i\in\{0,1\}$, one has
  \begin{multline*}
    \operatorname{PR}(\theta) = \rho \E[((U_1-\mu_1)^T\theta-(1-\mu_1^T\theta))^2] + (1-\rho)\E[((U_0-\mu_0)^T\theta+(\mu_0+\Pi\theta)^T\theta+1)^2] \\
    = \rho [\|\theta\|^2_{\Sigma_1} + (1-\mu_1^T\theta)^2] + (1-\rho)[\|\theta\|^2_{\Sigma_0} + ((\mu_0+\Pi\theta)^T\theta+1)^2]
  \end{multline*}
  Both squared norms are convex as well as the squares of,
  respectively, an affine function and a convex second order
  polynomial (as $\Pi$ is positive semidefinite).

For (b), examining
\begin{equation}
  \label{eq:perfomative_risk_Phi}
  \operatorname{PR}(\theta) = \rho \E[\Phi(U_1^T\theta)] + (1-\rho)\E[\Phi(-(U_0+\Pi\theta)^T\theta)]
\end{equation}
one observes that
\begin{itemize}
  \item $\Phi(u_1^T\theta)$ is convex by our assumption on $\Phi$ (for any value of $u_1$);
  \item $(u_0+\Pi\theta)^T\theta$ is a convex second order
    (multivariate) polynomial in $\theta$ when $\Pi$ is positive semidefinite and
    $v\mapsto\Phi(-v)$ is convex non decreasing, hence
    $\Phi(-(u_0+\Pi\theta)^T\theta)$ is also convex.
\end{itemize}
Thus $\operatorname{PR}(\theta)$ is also convex in $\theta$ as the
expectation of convex functions.
\end{proof}

\subsection{Proof of Theorem~\ref{thm:variational_form_performative_risk}}
\label{app:proofrob}
\begin{proof}
To obtain~\cref{eq:adversarial_formulation}, as $v\mapsto\Phi(-v)$ is
non decreasing, one has
\[
  \max_{\{\Delta u_0 : \|\Delta u_0\|_{\Pi^{-1}} \leq \|\theta\|_{\Pi}\}}\Phi\left(-(u_0+\Delta u_0)^T\theta\right) = \Phi\left(-u_0^T\theta- \max_{\{\Delta u_0 : \|\Delta u_0\|_{\Pi^{-1}} \leq \|\theta\|_{\Pi}\}}(\Delta u_0)^T\theta\right)
\]
for any outcome $u_0$ of the random variable $U_0$. The maximization occurs for $\Delta u_0 = \Pi \theta$, which does not depend on $u_0$, leading to $(\Delta u_0)^T\theta = \|\theta\|_\Pi^2$ and thus
\[
  \max_{\{\Delta U_0 : \|\Delta U_0\|_{\Pi^{-1}} \leq \|\theta\|_{\Pi}\}}\Phi\left(-(U_0+\Delta U_0)^T\theta\right) = \Phi\left(-U_0^T\theta - \|\theta\|^2_\Pi \right)
\]
whose expectation is recognized as the second term
of~\cref{eq:perfomative_risk_Phi}.
\end{proof}

\subsection{Proof of Theorem~\ref{thm:regularization}}

\begin{proof}
  Recall from the proof of Theorem~\ref{thm:convex_performative_classif} that the performative risk can be rewritten as follows.
  \begin{align*}
    \operatorname{PR}(\theta) 
    &= \rho \E[\Phi(U_1^T\theta)] + (1-\rho)\E[\Phi(-(U_0+\Pi\theta)^T\theta)]\\
    &= \rho \E[\Phi(U_1^T\theta)] + (1-\rho)\E[\Phi(-U_0^T\theta - \|\theta\|^2_\Pi)]
  \end{align*}
  Let $\mu_\rho = \rho\mu_1 - (1-\rho)\mu_o$. We have,
  \begin{align*}
    \Phi(0) = \operatorname{PR}(0)
    &\geq \operatorname{PR}(\theta^*) \\
    &= \rho \E[\Phi(U_1^T\theta^*)] +  (1-\rho)\E[\Phi(-U_0^T\theta^* - \|\theta^*\|^2_\Pi)]\\
    &\geq  \rho \Phi(\E[U_1^T\theta^*]) + (1-\rho)\Phi(\E[-U_0^T\theta^* - \|\theta^*\|^2_\Pi])\\
    &= \rho \Phi(\mu_1^T\theta^*) + (1-\rho)\Phi(-\mu_0^T\theta^* - \|\theta^*\|^2_\Pi)\\
    &\geq  \Phi \left( \rho\mu_1^T\theta^* - (1-\rho)(\mu_0^T\theta^* + \|\theta^*\|^2_\Pi) \right)\\
    &= \Phi \left( \mu_\rho^T\theta^* - (1-\rho)\|\theta^*\|^2_\Pi \right)
  \end{align*}
  where we have successively used Jensen's inequality and the convexity of $\Phi$. 
  Since $\Phi$ is non-increasing, we must have $0\leq \mu_\rho^T\theta^* - (1-\rho)\|\theta^*\|^2_\Pi$. Denoting $\beta = \Pi^\frac{1}{2}\theta^*$, it holds that
  \begin{align*}
    (1-\rho)\|\beta\|^2
    \leq \mu_\rho^T \Pi^{-\frac{1}{2}}\beta
    \leq \|\Pi^{-\frac{1}{2}}\mu_\rho\| \|\beta \|
  \end{align*}
  where that last inequality is obtained using Cauchy–Schwarz. Hence, $\|\theta^*\|_\Pi = \|\beta \| \leq \frac{\| \mu_\rho^T \Pi^{-\frac{1}{2}}\|}{1-\rho}$.
\end{proof}

\section{Numerical Experiments}
\label{app:expe}

\subsection{Full parameters list}

In this section, we report all the parameters needed to reproduce the figures in the paper. Note that we always use the same step size for all the methods. This choice stems from the fact that the methods are equivalent (up to the regularization parameter) when there is no performative effect.

\begin{table}[h!]
    \caption{Parameters used for figure \cref{fig:log2dtraj}}
    \centering
    \begin{tabular}{@{}ll@{}}
        \toprule
        \textbf{Parameter} & \textbf{Value} \\
        \midrule
        Number of iterations ($\text{num\_iter}$) & 100 \\
        Sample size ($n$) & 1000 \\
        Scale ($\sigma$) & 0.5 \\
        Average number of iterations ($\text{num\_iter\_average}$) & 100 \\
        Step size ($\text{step\_size}$) & 0.1 \\
        Regularization parameter ($\lambda$) & $3 \times 10^{-2}$ \\
        \bottomrule
    \end{tabular}
    \label{tab:fig2a}
\end{table}

\begin{table}[h!]
    \caption{Parameters used for figure \cref{fig:sfvsrp}}
    \label{tab:fig2b}
    \centering
    \begin{tabular}{@{}ll@{}}
        \toprule
        \textbf{Parameter} & \textbf{Value} \\
        \midrule
        Number of iterations ($\text{num\_iter}$) & 25 \\
        Sample size ($n$) & 1000 \\
        Initial scale ($\text{scale}_0$) & 0.5 \\
        Transition probability matrix ($\Pi$) & $\text{diag}([0.1, 3, 0, 0, 0, 0, 0])$ \\
        Mean of class $0$ ($\mu$) & $\begin{bmatrix}1 & 2 & 0.5 & 0.5 & 0 & 0 & 0\end{bmatrix}$ \\
        Average number of iterations ($\text{num\_iter\_average}$) & 100 \\
        Step size ($\text{step\_size}$) & 0.1 \\
        Regularization parameter ($\lambda$) & $10^{-1}$ \\
        \bottomrule
    \end{tabular}
\end{table}

\begin{table}[h!]
    \caption{Parameters used for \cref{fig:houses}}
    \label{tab:fig3}
    \centering
    \begin{tabular}{@{}ll@{}}
        \toprule
        \textbf{Parameter} & \textbf{Value} \\
        \midrule
        Number of iterations ($\text{num\_iter}$) & 15 \\
        Sample size ($n$) & 18000 \\
        Number of runs ($\text{n\_runs}$) & 20 \\
        Step size ($\text{step\_size}$) & 0.2 \\
        Regularization parameter ($\lambda$) & $5 \times 10^{-3}$ \\
        \bottomrule
    \end{tabular}
\end{table}

\subsection{Repeated Risk Minimization}
\label{app:rrm}

In this section, we report the same learning tasks as those reported in the main text, but for Repeated Risk Minimization (RRM). In every setting, as soon as the performative effect is not negligible, the technique diverges. To ensure the readability of the figure and avoid shrinking the differences between the other algorithms, we report it separately.

\begin{figure}[htb]
    \centering
    \begin{subfigure}[b]{0.6\textwidth}
        \centering
        \includegraphics[width=\textwidth]{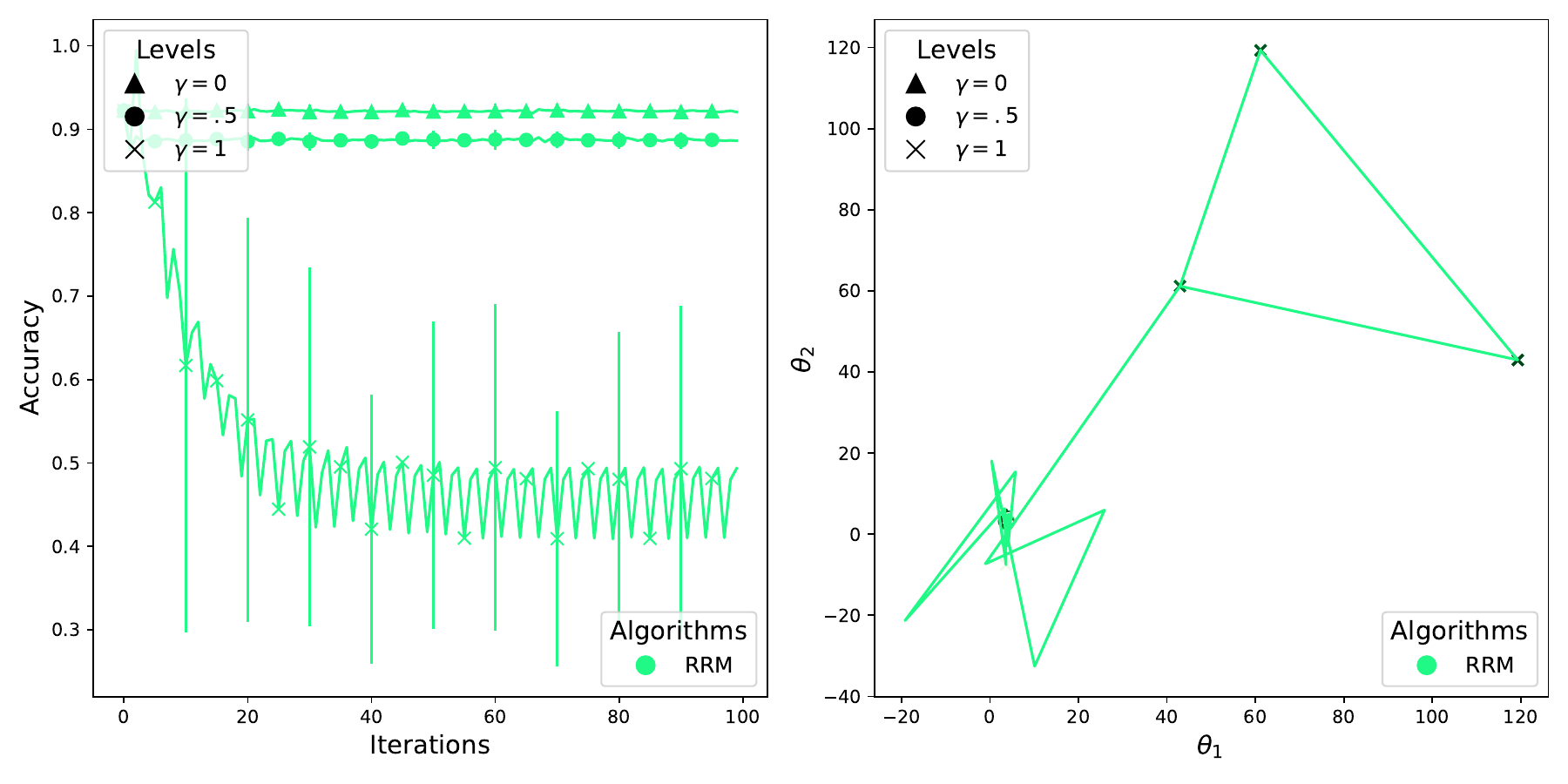}
        \caption{}
        \label{fig:log2trajrrm}
    \end{subfigure}
    \hfill
    \begin{subfigure}[b]{0.3\textwidth}
        \centering
        \includegraphics[width=\textwidth]{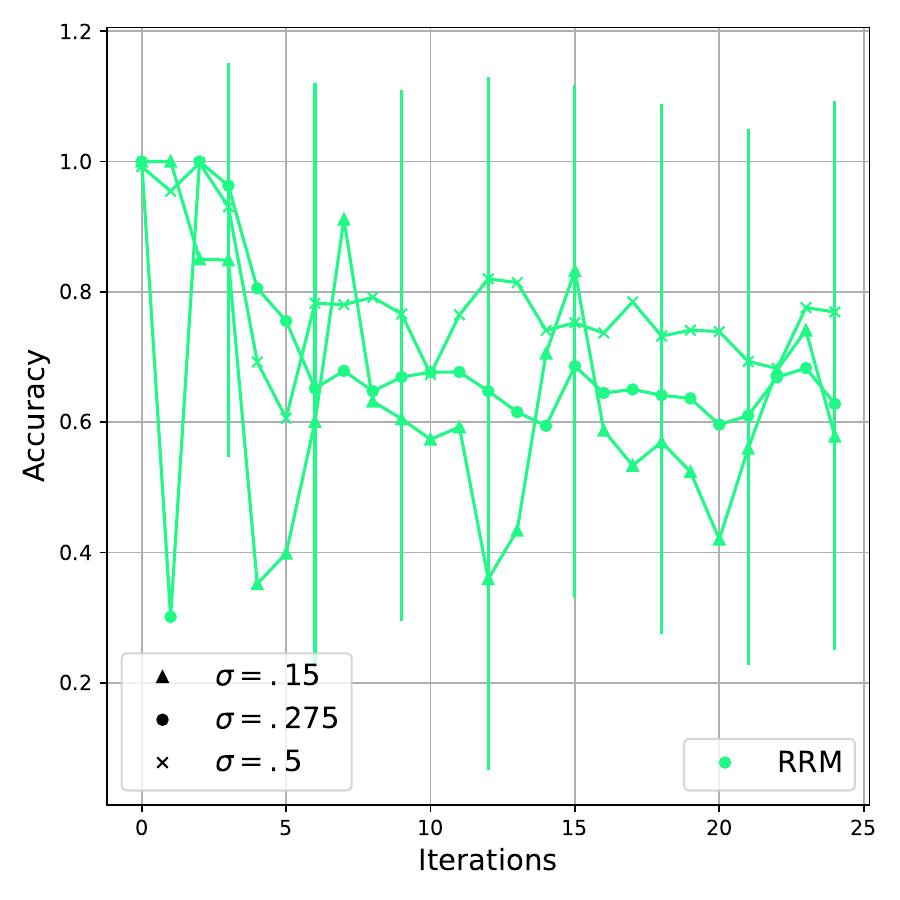}
        \caption{}
        \label{fig:sfvsrprrm}
    \end{subfigure}
    \caption{(a) Learning a logistic regression between two Gaussian distributions centered in $(0,0)$ and $(-1, 1)$ and different magnitude of performative effects $\gamma$. (b) Accuracy of a classification with quadratic loss on two Gaussian of dimension $7$ with various level of noise $\sigma$}
\end{figure}

\newpage
\section*{NeurIPS Paper Checklist}

\begin{enumerate}

\item {\bf Claims}
    \item[] Question: Do the main claims made in the abstract and introduction accurately reflect the paper's contributions and scope?
    \item[] Answer: \answerYes{} %
    \item[] Justification: We define our approach with pushforward measure in \cref{sec:pushforward}, the application to classification in\cref{sec:classif}, the new theorems on convexity in \cref{sec:convexity} and the relation with robustness in \cref{sec:robustness}

\item {\bf Limitations}
    \item[] Question: Does the paper discuss the limitations of the work performed by the authors?
    \item[] Answer: \answerYes{} %
    \item[] Justification: We discuss our model as pushforward measures in \cref{sec:pushforward} and we discuss our theorems to ensure convexity in \cref{sec:convexity}. For the experiments, we acknowledge the lack of true performative datasets and use the same simulations approximation as in previous work, as described in \cref{sec:expe}.

\item {\bf Theory Assumptions and Proofs}
    \item[] Question: For each theoretical result, does the paper provide the full set of assumptions and a complete (and correct) proof?
    \item[] Answer: \answerYes{} %
    \item[] Justification: All theorems are defined with their full set of assumptions, with proofs just after them or in the supplementary material (\cref{sec:app}).

\item {\bf Experimental Result Reproducibility}
    \item[] Question: Does the paper fully disclose all the information needed to reproduce the main experimental results of the paper to the extent that it affects the main claims and/or conclusions of the paper (regardless of whether the code and data are provided or not)?
    \item[] Answer: \answerYes{} %
    \item[] Justification: The experiments are described in \cref{sec:expe}. Additional choice of parameters are reported in \cref{sec:app}.

\item {\bf Open access to data and code}
    \item[] Question: Does the paper provide open access to the data and code, with sufficient instructions to faithfully reproduce the main experimental results, as described in supplemental material?
    \item[] Answer: \answerNo{} %
    \item[] Justification: the code will be publicly released after publication

\item {\bf Experimental Setting/Details}
    \item[] Question: Does the paper specify all the training and test details (e.g., data splits, hyperparameters, how they were chosen, type of optimizer, etc.) necessary to understand the results?
    \item[] Answer: \answerYes{} %
    \item[] Justification: The detailed parameters are in Appendix (\cref{app:expe}). The choice of parameters are quite limited as the studied models have very simple architecture 

\item {\bf Experiment Statistical Significance}
    \item[] Question: Does the paper report error bars suitably and correctly defined or other appropriate information about the statistical significance of the experiments?
    \item[] Answer: \answerYes{} %
    \item[] Justification: All figures report standard deviation over the runs, as stated in legend.
\item {\bf Experiments Compute Resources}
    \item[] Question: For each experiment, does the paper provide sufficient information on the computer resources (type of compute workers, memory, time of execution) needed to reproduce the experiments?
    \item[] Answer: \answerNA{} %
    \item[] Justification: The experiments are only using small datasets so all runs were done on a single laptop where the time of execution is very small (few seconds per run).
    
\item {\bf Code Of Ethics}
    \item[] Question: Does the research conducted in the paper conform, in every respect, with the NeurIPS Code of Ethics \url{https://neurips.cc/public/EthicsGuidelines}?
    \item[] Answer: \answerYes{} %
    \item[] Justification: The paper is conform to the NeurIPS code of Ethics.

\item {\bf Broader Impacts}
    \item[] Question: Does the paper discuss both potential positive societal impacts and negative societal impacts of the work performed?
    \item[] Answer: \answerNA{} %
    \item[] Justification: The paper is theoretical and should not have directly any societal impacts. In fact, raising the attention on the feedback loop that might occurs in performative learning could lead to a positive societal impact, if better understood.
\item {\bf Safeguards}
    \item[] Question: Does the paper describe safeguards that have been put in place for responsible release of data or models that have a high risk for misuse (e.g., pretrained language models, image generators, or scraped datasets)?
    \item[] Answer: \answerNA{} %
    \item[] Justification: The paper poses no such risks.

\item {\bf Licenses for existing assets}
    \item[] Question: Are the creators or original owners of assets (e.g., code, data, models), used in the paper, properly credited and are the license and terms of use explicitly mentioned and properly respected?
    \item[] Answer: \answerYes{} %
    \item[] Justification: We only reuse one dataset (Housing) and we cite it.

\item {\bf New Assets}
    \item[] Question: Are new assets introduced in the paper well documented and is the documentation provided alongside the assets?
    \item[] Answer: \answerNA{} %
    \item[] Justification:  The paper does not release new assets.

\item {\bf Crowdsourcing and Research with Human Subjects}
    \item[] Question: For crowdsourcing experiments and research with human subjects, does the paper include the full text of instructions given to participants and screenshots, if applicable, as well as details about compensation (if any)? 
    \item[] Answer: \answerNA{} %
    \item[] Justification: The paper does not involve crowdsourcing nor research with human subjects.

\item {\bf Institutional Review Board (IRB) Approvals or Equivalent for Research with Human Subjects}
    \item[] Question: Does the paper describe potential risks incurred by study participants, whether such risks were disclosed to the subjects, and whether Institutional Review Board (IRB) approvals (or an equivalent approval/review based on the requirements of your country or institution) were obtained?
    \item[] Answer: \answerNA{} %
    \item[] Justification: The paper does not involve crowdsourcing nor research with human subjects.

\end{enumerate}

\end{document}